\title{Don't Forget Your Teacher:\\ A Corrective Reinforcement Learning Framework}
\author{%
 Mohammadreza~Nazari \\
 \texttt{mrza.nazari@gmail.com} \\
  \And
    Majid~Jahani \\
 \texttt{maj316@lehigh.edu} \\
   \AND
    Lawrence~V.~Snyder \\
 \texttt{lvs2@lehigh.edu} \\
    \And
    Martin Tak\'a\v{c} \\
 \texttt{takac.MT@gmail.com} 
 \AND
  	{ \normalfont Department of Industrial and Systems Engineering}\\
	Lehigh University, Bethlehem, PA 18015 
}
\DeclareMathOperator*{\softmax}{softmax}
\newcommand{\dashrule}[1][black]{%
	\color{#1}\rule[\dimexpr.5ex-.2pt]{4pt}{.4pt}\xleaders\hbox{\rule{4pt}{0pt}\rule[\dimexpr.5ex-.2pt]{4pt}{.4pt}}\hfill\kern0pt%
}
\definecolor{myGreen}{RGB}{0,175,0}
\DeclareMathOperator*{\argmin}{argmin} 
\newtheorem*{remark}{Remark}
\newtheorem{corollary}{Corollary}
\newtheorem{assumption}{Assumption}
\newtheorem{theorem}{Theorem}
\newtheorem{lemma}{Lemma}
\newtheorem{proposition}{Proposition}
\begin{document}
\setlength{\abovedisplayskip}{3pt}
\setlength{\belowdisplayskip}{3pt}

\maketitle
	\begin{abstract}
		Although reinforcement learning (RL) can provide reliable solutions in many settings, practitioners are often wary of the discrepancies between the RL solution and their status quo procedures. Therefore, they may be reluctant to adapt to the novel way of executing tasks proposed by RL. On the other hand, many real-world problems require relatively small adjustments from the status quo policies to achieve improved performance. Therefore, we propose a student-teacher RL mechanism in which the RL (the ``student'') learns to maximize its reward, subject to a constraint that bounds the difference between the RL policy and the ``teacher'' policy. The teacher can be another RL policy (e.g., trained under a slightly different setting), the status quo policy, or any other exogenous policy. We formulate this problem using a stochastic optimization model and solve it using a primal-dual policy gradient algorithm. We prove that the policy is asymptotically optimal. However, a naive implementation suffers from high variance and convergence to a stochastic optimal policy. With a few practical adjustments to address these issues, our numerical experiments confirm the effectiveness of our proposed method in multiple GridWorld scenarios.\vspace{-5pt}
	\end{abstract}
	
	\section{Introduction}\vspace{-5pt}
	
We encourage using a new paradigm called {\em corrective RL}, in which a reinforcement learning (RL) agent is trained to maximize its reward while not straying ``too far'' from a previously defined policy. The motivation is twofold: (1) to provide a gentler transition for a decision-maker who is accustomed to using a certain policy but now considers implementing RL, and (2) to develop a framework for gently transitioning from one RL solution to another when the underlying environment has changed.

	RL has recently achieved considerable success in artificially created environments, such as Atari games \cite{mnih2013playing,mnih2016asynchronous} or robotic simulators \cite{lillicrap2015continuous}. Exploiting the power of neural networks in RL algorithms has been shown to exhibit super-human performance by enabling automatic feature extraction and policy representations, but  real-world applications are still very limited, conceivably due to lack of representativity of the optimized policies. Over the past few years, a major portion of the RL literature has been developed for RL agents with {\em no prior information} about how to do a task. Typically, these algorithms start with random actions and learn while interacting with the environment through trial and error. However, in many practical settings, prior information about good solutions {\em is available}, whether from a previous RL algorithm or a human decision-maker's prior experience. Our approach trains the RL agent to make use of this prior information when optimizing, in order to avoid deviating too far from a target policy.
	
	Although  toy environments and Atari games are prevalent in the RL literature due to their simplicity, RL has recently been trying to find its path to  real-world applications such as recommender systems \cite{chen2018stabilizing}, transportation \cite{nazari2018reinforcement}, Internet of Things \cite{feng2017smart}, supply chain \cite{gijsbrechts2018can,oroojlooyjadid2017deep} and various control tasks in robotics \cite{gu2017deep}. In all of these applications, there is a crucial risk that the new policy might not operate logically or safely, as one was expecting it to do. A policy that attains a large reward but deviates too much from a known policy---which follows logical steps and processes---is not desirable for these tasks.  For example, users of a system who were accustomed to the old way of doing things would likely find it hard to switch to a newly discovered policy, especially if the benefit of the new policy is not obvious or immediately forthcoming. Indeed, we argue that many real-world tasks only need a small \textit{corrective} fix to the currently running policies to achieve their desired goals, instead of designing everything from scratch. Throughout this paper, we adhere to this paradigm---we call it ``corrective RL''---which utilizes an acceptable policy as a gauge when designing novel policies.
We consider two agents, namely a {\em teacher} and a {\em student}. Our main question is how the student can improve upon the teacher's policy while not deviating too far from it. More formally, we would like to train a student in a way that it maximizes a long-term RL objective while keeping its own policy close to that of the teacher. 
	
	For example, consider an airplane that is controlled by an autopilot that follows the shortest haversine path policy towards the destination. Then, some turbulence occurs, and we want to modify the current path to avoid the turbulence. A ``pure'' RL algorithm would re-optimize the trajectory from scratch, potentially deviating too far from the optimal path in order to avoid the turbulence. Corrective RL would ensure that the adjustments to the current policy are small, so that the flight follows a similar path and has a similar estimated time of arrival, while ensuring that the passengers experience a more comfortable (less turbulent) flight. Another example is in predictive maintenance, where devices are periodically inspected for possible failures. Inspection schedules are usually prescribed by the device designers, but many environmental conditions affect failure rates, hence there is no guarantee that factory schedules are perfect. If the objective is to reduce downtime with only slight adjustments to the current schedules, conventional RL algorithms would have a hard time finding such policies. 
	
Similar concerns arise in other business and engineering domains as well, including supply chain management, queuing systems, finance, and robotics. For example, an enormous number of exact and inexact methods have been proposed for  classical inventory management problems under some assumptions on the demand, e.g., that it follows a normal distribution \cite{snyder2019fundamentals}. Once we add more stochasticity to the demand distribution or consider more complicated cost functions, these problems often become intractable using classical methods. Of course, vanilla RL can provide a mechanism for solving more complex inventory optimization problems, but practitioners may prefer policies that are simple to implement and intuitive to understand. Corrective RL can help steer the policy toward the preferred ones, while still maintaining near-optimal performance. 
Given these examples, one can interpret our approach as an improvement on black-box heuristics, which uses a data-driven approach to improve the performance of these algorithms without dramatic reformulations.

	

	The contributions of this work are as follows: \textit{i}) we introduce a new paradigm for RL tasks, convenient for many real-world tasks, that improves upon the currently running system's policy with a small perturbation of the policy; \textit{ii}) we formulate this problem using an stochastic optimization problem and propose a primal--dual policy gradient algorithm which we prove to be asymptotically optimal, and \textit{iii}) using practical adjustments, we illustrate how an RL framework can act as an improvement heuristic. We show the effectiveness and properties of the algorithm in multiple GridWorld motion planning experiments.\vspace{-5pt}

\section{Problem Definition} \vskip-5pt
	We consider the standard definition of a Markov decision process (MDP) using a tuple $(\mathcal{X},\mathcal{A},C,P,P_0)$. In our notation, $\mathcal{X}\coloneqq\mathcal{X}'\cup\{x_{term}\}=\{1,2,\ldots,n,x_{term}\}$ is the state space, where $\mathcal{X}'$ is the set of transient states and $x_{term}$ is the terminal state; $\mathcal{A}$ is the set of actions; $C: \mathcal{X}\times \mathcal{A} \to [0,C_{max}]$ is the cost function; $P$ is the transition probability distribution; and $P_0$ is the distribution of the initial state $x_0$. At each time step $t=0,1,\ldots$, the agent observes $x_t\in\mathcal{X}$, selects $a_t\in\mathcal{A}$, and incurs a cost $c_t = C(x_t,a_t)$. Selecting the action $a_t$ at state $x_t$ transitions the agent to the next state $x_{t+1}\sim P(\cdot|x_t,a_t)$.
	
	Consider two agents, a teacher and a student. The teacher has prior knowledge about the task and prescribes an action for any state that the student encounters, and the student has the authority to follow the teacher's action or act independently based on its own learned policy. Let $\pi_S$ denote the policy of the student and $\pi_T$ be the policy of the teacher, where both $\pi_S$ and $\pi_T$ are stationary stochastic policies defined as a mapping from a state--action pair to a probability distribution, i.e., $\pi_i:\mathcal{X}\times \mathcal{A} \to [0,1]$, $i\in\{S,T\}$. For example, $\pi_S(a|x)$ denotes the probability of choosing action $a$ in state $x$ by the student. In policy gradient methods, the policies are represented with a function approximator, usually modeled by a neural network, where we denote by $\theta\in\mathbb{R}^{N}$ and $\phi\in\mathbb{R}^{N}$ the corresponding policy weights of the student and teacher, respectively; the teacher and student parameterized policies are denoted by $\pi_T(\cdot|\cdot;\phi)$ and $\pi_S(\cdot|\cdot;\theta)$.  In what follows, we adapt this parameterization structure into the notation and interchangeably refer to $\pi_S$ and $\pi_T$ with their associated weights, $\theta$ and $\phi$.
	
	We consider the simulation optimization setting, where we can sample from the underlying MDP and observe the costs. Consider a possible state--action--cost trajectory $\tau\coloneqq\{x_0,a_0,c_0,x_1,a_1,c_1,\cdots,x_{H-1},a_{H-1},c_{H-1},x_H\}$ and let $\mathcal{T}\coloneqq\{\tau\}$ to be the set of all possible trajectories under all admissible policies.  For simplicity of exposition, we assume that the first hitting time $H$ of a terminal state $x_{term}$ from any given $x$ and following a stationary policy is bounded almost surely with an upper bound $H_{max}$, i.e., $H\leq H_{max}$ almost surely. Since the sample trajectories in many RL tasks terminate in finite time, this assumption is not restrictive. For example, the game fails after reaching a certain state or a time-out signal may terminate the trajectory. Along a trajectory $\tau$, the system incurs a discounted cost $J_{\theta}(\tau)=\sum_{t=0}^{H-1} \gamma^t c_t$, with discount factor $\gamma\in(0,1]$, and the probability of sampling such a trajectory is $\mathbb{P}_{\theta}(\tau)=P_0(x_0)\prod_{t=0}^{H-1}\pi_S(a_t|x_t;\theta)P(x_{t+1}|x_t,a_t)$. 
	We denote the expected cost from state $x$ onward until hitting the terminal state $x_{term}$ by $V_{\theta}(x)$, i.e., 
	\begin{align}
	V_{\theta}(x) = \mathbb{E}_{\theta}\left[\textstyle{\sum}_{t=0}^{H-1} \gamma^t C(x_t,a_t)|x_0 = x\right].
	\end{align}

	\subsection{Distance Measure} \vskip-5pt
An important question that arises is how to quantify the distance between the policies of the teacher and the student. There are several distance measures studied in the literature for computing the closeness of two probability measures. Among those, the \textit{Kullback-Leibler} (KL) divergence \cite{nasrabadi2007pattern} is a widely used metric.  In this work, we consider both \textit{reverse} (KL-R) and \textit{forward} (KL-F) KL-divergence, defined as
	\begin{align}\tag{KL-R}
	D_{KL}(\theta\;\|\; \phi) \coloneqq D_{KL}&(  \mathbb{P}_{\theta} (\tau)\;\|\;\mathbb{P}_{\phi} (\tau) ) =
	\textstyle{\sum}_{\tau\in\mathcal{T}} \mathbb{P}_{\theta}(\tau) \log\tfrac{\mathbb{P}_{\theta}(\tau)}{\mathbb{P}_{\phi}(\tau)}\label{eq:traj-wise-reverse}\\
\tag{KL-F}D_{KL}(\phi\;\|\; \theta) \coloneqq D_{KL}&(\mathbb{P}_{\phi} (\tau) \;\|\; \mathbb{P}_{\theta} (\tau) ) =
	\textstyle{\sum}_{\tau\in\mathcal{T}} \mathbb{P}_{\phi}(\tau) \log\tfrac{\mathbb{P}_{\phi}(\tau)}{\mathbb{P}_{\theta}(\tau)}.\label{eq:traj-wise-forward}
	\end{align}
	
	KL-divergence is known to be an asymmetric distance measure, meaning that changing the order of the student and teacher distributions will cause different learning behaviors. We will use the reverse KL-divergence in the theoretical analysis since it provides more compact notations. However, in all of the experiments, we will consider the forward setting, i.e., ${D}_{KL}(\phi \| \theta)$, unless otherwise specified. Informally speaking, this form of KL-divergence, which is also known as the \textit{mean-seeking} KL, allows the student to perform actions that are not included in the teacher's behavior. This is because \textit{i}) when the teacher can perform an action $a_t$ in a given state $s_t$, the student should also have $\pi_S(a_t|s_t)>0$ to keep the distance finite, and \textit{ii}) the student can have $\pi_S(a_t|s_t)>0$ irrespective of whether the teacher is doing that action or not. The reverse direction, $D_{KL}(\theta\|\phi )$, known as the \textit{mode-seeking} KL, can be useful as well. For example, let's assume that the teacher policy is a mixture of several Gaussian sub-policies. Using the reverse order will allow the student to assign only one sub-policy as its decision making policy. Hence, the choice of reverse KL would be preferred if the student wants to find a policy which is close to a teacher's sub-policy with the highest return. The justification of this behavior is also visible from the definition: when $\pi_S(a_t|s_t)>0$ for a given state and action, then the teacher also should have $\pi_T(a_t|s_t)>0$. Also, $\pi_T(a_t|s_t) = 0$ would not allow  $\pi_S(a_t|s_t) > 0$. For more detailed discussion and examples, we refer the interested reader to Appendix \ref{sec:measure} and Section 10.1.2 of \cite{nasrabadi2007pattern}.\vspace{-5pt}

	\subsection{Optimization Problems}
\vskip-5pt
The student's optimization problems that we would like to solve for reverse KL-divergence \eqref{opt-r} and the forward KL-divergence \eqref{opt-f} are defined as	\vspace{-.1cm}
    \begin{multicols}{2}
    \noindent
	\begin{align}\tag{OPT-R}
	\begin{split}
	\min_{\theta\in\Theta}&\;V_{\theta}(x_0)\\
	\text{s.t.}& \;D_{KL}(\theta\;\|\; \phi) \leq \delta\label{opt-r}
	\end{split}
	\end{align} 
    \columnbreak
    \noindent
	\begin{align}\tag{OPT-F}
	\begin{split}
	\min_{\theta\in\Theta}&\;V_{\theta}(x_0)\\
	\text{s.t.}& \;D_{KL}(\phi\;\|\;\theta ) \leq \delta, \label{opt-f}
	\end{split}
	\end{align} 
	\end{multicols}
	\vspace{-1em}
	\vspace{-.1cm}where $\delta$ is an upper bound on the KL-divergence and $\Theta$ is a convex compact set of possible policy parameters. Most of the theoretical analysis of the two optimization problems is quite similar, so we will use \eqref{opt-r} as our main formulation. In Appendix \ref{sec:equivalence}, we investigate the equivalence of both problems and state their minor differences.
	
	The widely adopted problem studied for MDPs only contains the objective function; however, we impose an additional constraint to restrict the student's policy. By fixing an appropriate value for $\delta$, one can enforce a constraint on the maximum allowed deviation of the student policy  from that of the teacher. The objective is to find a set of optimal points $\theta^*$ that minimizes the discounted expected cost while not violating the KL constraint. Notice that $\pi_S=\pi_T$ is a trivial feasible solution. In addition, we need to have the following assumption to ensure that \eqref{opt-r} is well-defined:
	\begin{assumption}\label{assumption:feas-optr}
		\textbf{(Well-defined \eqref{opt-r})} For any state--action pair $(x,a)\in\mathcal{X}\times\mathcal{A}$ with $\pi_T(x,a)=0$, we have $\pi_S(x,a)=0$.\vspace{-5pt}
	\end{assumption}
	Intuitively, Assumption \ref{assumption:feas-optr} specifies that when the teacher does not take a specific action in a given state, the student also cannot choose that action. Even though this assumption might seem restrictive, it is valid in situations in which the student is indeed limited to the positive-probability action space of the teacher.
Alternatively, we can certify this assumption by adding a small noise term to the outcome of the teacher's policy at the expense of some information loss. \vspace{-5pt}
    
	\subsection{Lagrangian Relaxation of \eqref{opt-r}}
	\vskip-5pt
	The standard method for solving \eqref{opt-r}  is by applying  Lagrangian relaxation \cite{bertsekas1999nonlinear}. We define the Lagrangian function
	\begin{align}
	    L(\theta,\lambda)
	    \coloneqq 
	    V_{\theta}(x_0) + \lambda \left({D}_{\text{KL}}(\theta\;\|\;\phi) -\delta\right),\label{eq:def-lag}
	\end{align}
	where $\lambda$ is the Lagrange multiplier. Then the optimization problem \eqref{opt-r} can be converted into the following problem:
	\begin{align}
	\textstyle{\max}_{\lambda\geq 0}\textstyle{\min}_{\theta \in \Theta} \; L(\theta,\lambda). \label{eq:opt-lag}
	\end{align}
	The intuition beyond \eqref{eq:opt-lag} is that we now allow the student to deviate arbitrarily much from the teacher's policy in order to decrease the cumulative cost, but we penalize any such deviation. 
	
	Next, we define a dynamical system which, as we will prove in Appendix \ref{sec:app-grad-conv}, solves problem \eqref{opt-r} under several common assumptions for stochastic approximation methods. Once we know the optimal Lagrange multiplier $\lambda^*$, then the student's optimal policy is
	\begin{equation}
\theta^* \in \textstyle{\argmin}_{\theta} V_{\theta}(x_0) + \lambda^* \left(D_{\text{KL}}(\theta\;\|\;\phi) -\delta\right).
	\end{equation} 
	A point $(\theta^*,\lambda^*)$ is a saddle point of $L(\theta,\lambda)$ if for some $r>0$, we have
	\begin{align}
	L(\theta^*,\lambda)\leq L(\theta^*,\lambda^*)\leq L(\theta,\lambda^*)
	\end{align}
	for all $\theta\in\Theta\cap\mathbb{B}_r(\theta^*)$ and $\lambda \geq 0$, where $\mathbb{B}_r(\theta^*)$ represents a ball around $\theta^*$ with radius $r$. Then, the saddle point theorem \cite{bertsekas1999nonlinear} immediately implies that $\theta^*$ is the local optimal solution of \eqref{opt-r}.\vspace{-5pt}

    \section{Primal--Dual Policy Gradient Algorithm}	\label{sec:pdpg}
    \vskip-5pt
    We propose a primal--dual policy gradient (PDPG) algorithm for solving \eqref{opt-r}. Due to space limitation, we leave the detailed algorithm to Appendix \ref{app:pdpg}, but the overall scheme is as follows. After initializing the student's policy parameters, possibly with those of the teacher, we sample multiple trajectories under the student's policy at each iteration $k$. Then the sampled trajectories are used to calculate the approximate  gradient of the Lagrangian function with respect to $\theta$ and $\lambda$. Finally, using an optimization algorithm, we update $\theta$ and $\lambda$  according to the approximated gradients.

	In order to prove our main convergence result, we need some technical assumptions on the student's policy and step sizes. 
	
	\begin{assumption}\label{assumption:differentiable}
		\textbf{(Smooth policy)} For any $(x,a)\in\mathcal{X}\times\mathcal{A}$, $\pi_S(a|x;\theta)$ is a continuously differentiable function in $\theta$ and its gradient is $\mathscr{L}$-Lipschitz continuous, i.e., for any $\theta^1$ and $\theta^2$,
		\begin{align}
		\left\|\nabla_\theta\pi_S(a|x;\theta)\bigr|_{\theta=\theta^1}-\nabla_\theta\pi_S(a|x;\theta)\bigr|_{\theta=\theta^2}\right\| \leq \mathscr{L}\|\theta^1- \theta^2\|.
		\end{align}
	\end{assumption}

	\begin{assumption}\label{assump:step}
	\textbf{(Step-size rules)} The step-sizes $\alpha_1(k)$ and $\alpha_2(k)$ in update rules \eqref{alg:eq:update-theta} and \eqref{alg:eq:update-lambda} satisfy the following relations:  
	\begin{enumerate}[label=(\textit{\roman*}),ref=(\roman*),noitemsep,topsep=0pt]
		\item \label{assump:step:1} $\sum_{k} \alpha_1(k) = \infty ; \; \sum_{k} \alpha_1^2(k)<\infty$,
		\item \label{assump:step:2} $\sum_{k} \alpha_2(k) = \infty ; \; \sum_{k} \alpha_2^2(k)<\infty$,
		\item \label{assump:step:3} $\alpha_2(k) = o(\alpha_1(k))$. \vspace{-5pt}
	\end{enumerate} 
	\end{assumption}
	Relations \ref{assump:step:1} and \ref{assump:step:2} in Assumption \ref{assump:step} are common in stochastic approximation algorithms, and \ref{assump:step:3} indicates that the Lagrange multiplier update is in a slower time-scale compared to the policy updates. The latter condition simplifies the convergence proof by allowing us to study the PDPG as a two-time-scale stochastic approximation algorithm. The following theorem states the main theoretical result of this paper.

	\begin{theorem}\label{thm:main}
		Under Assumptions \ref{assumption:feas-optr}, \ref{assumption:differentiable},  and \ref{assump:step}, the sequence of policy updates (starting from $\theta^0$ sufficiently close to a local optimum point $\theta^*$) and Lagrange multipliers converges almost surely to a saddle point of the Lagrangian, i.e., $(\theta(k),\lambda(k)) \overset{a.s.}{\longrightarrow} (\theta^* , \lambda^*$). Moreover, $\theta^*$ is a local optimal solution of \eqref{opt-r}. 
	\end{theorem}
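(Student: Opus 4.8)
The plan is to recognize the primal--dual recursions as a two-time-scale stochastic approximation scheme and to invoke the ODE method, in the spirit of the constrained-MDP actor--critic analyses of Borkar and Bhatnagar. First I would write the primal update as $\theta(k+1)=\Gamma_\Theta\bigl(\theta(k)-\alpha_1(k)[\nabla_\theta L(\theta(k),\lambda(k))+M^{(1)}_{k+1}]\bigr)$ and the dual update as $\lambda(k+1)=\Gamma_{\Lambda}\bigl(\lambda(k)+\alpha_2(k)[(D_{KL}(\theta(k)\|\phi)-\delta)+M^{(2)}_{k+1}]\bigr)$, where $\Gamma_\Theta,\Gamma_\Lambda$ denote projections onto $\Theta$ and onto a bounded multiplier interval, and $M^{(i)}_{k+1}$ are the errors of the trajectory-based gradient estimators. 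Using the likelihood-ratio policy-gradient formula together with the bounded cost $C\le C_{max}$ and bounded horizon $H\le H_{max}$, I would show these estimators are unbiased conditioned on the past and have uniformly bounded second moments, so each $M^{(i)}$ is a square-integrable martingale-difference sequence; combined with $\sum_k\alpha_i^2(k)<\infty$ from Assumption~\ref{assump:step}, this controls the noise.

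The second step is the fast timescale. Because $\alpha_2(k)=o(\alpha_1(k))$ (Assumption~\ref{assump:step}\ref{assump:step:3}), on the fast clock $\lambda$ is quasi-static, so the $\theta$-iterate tracks the projected ODE $\dot\theta=\Gamma_\Theta\bigl(-\nabla_\theta L(\theta,\lambda)\bigr)$ for frozen $\lambda$. Here Assumption~\ref{assumption:differentiable} is essential: it makes $\nabla_\theta\pi_S$ Lipschitz, and together with the bounded horizon it renders $\theta\mapsto V_\theta(x_0)$ and $\theta\mapsto D_{KL}(\theta\|\phi)$ continuously differentiable with Lipschitz gradients (Assumption~\ref{assumption:feas-optr} keeping the KL term finite), so $L(\cdot,\lambda)$ serves as its own Lyapunov function for this gradient flow. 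I would then argue that, started near the local optimum $\theta^*$, the iterate stays in a neighborhood whose only internally chain-transitive invariant set is the local minimizer $\theta^*(\lambda)$, and conclude via the Kushner--Clark / Borkar lemma that $\|\theta(k)-\theta^*(\lambda(k))\|\to0$ almost surely.

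Third, on the slow timescale I would substitute the equilibrated primal $\theta(k)\approx\theta^*(\lambda(k))$, so that $\lambda$ tracks $\dot\lambda=\Gamma_\Lambda\bigl(\nabla_\lambda L(\theta^*(\lambda),\lambda)\bigr)=\Gamma_\Lambda\bigl(D_{KL}(\theta^*(\lambda)\|\phi)-\delta\bigr)$, which is projected gradient ascent on the locally concave dual function $g(\lambda)=\min_\theta L(\theta,\lambda)$. Its stable equilibria are exactly the points satisfying the stationarity and complementary-slackness KKT conditions, i.e. $\lambda^*$. The two-time-scale theorem then yields $(\theta(k),\lambda(k))\to(\theta^*,\lambda^*)$ almost surely, and since the limit satisfies $L(\theta^*,\lambda)\le L(\theta^*,\lambda^*)\le L(\theta,\lambda^*)$ locally, the saddle-point theorem cited above certifies that $\theta^*$ is a local optimum of \eqref{opt-r}.

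I expect the main obstacle to be the Lyapunov/stability analysis of the fast ODE. Because $\pi_S$ is represented by a neural network, $L(\cdot,\lambda)$ is nonconvex, so no unique global attractor is available; the best achievable is a local statement, which is precisely why the theorem requires $\theta^0$ sufficiently close to $\theta^*$ and why I would confine the whole argument to a ball on which $\theta^*(\lambda)$ is an isolated, asymptotically stable equilibrium. Handling the boundary reflection term introduced by $\Gamma_\Theta$ in this local Lyapunov argument, and ruling out spurious limit sets on the fast timescale, is the delicate part; the noise and step-size bookkeeping of the first step is routine by comparison.
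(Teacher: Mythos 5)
Your proposal is correct and follows essentially the same route as the paper's proof in Appendix B: casting the updates as a two-time-scale projected stochastic approximation with martingale-difference noise, using the Lagrangian as a local Lyapunov function for the fast $\theta$-ODE and the (negated) dual value for the slow $\lambda$-ODE, invoking LaSalle/Borkar-type invariant-set results, and closing with feasibility, complementary slackness, and the saddle-point theorem. The only cosmetic difference is that you describe the slow timescale as ascent on a locally concave dual, whereas the paper argues stability of $\lambda^*$ directly via a Lyapunov/LaSalle argument without asserting concavity.
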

	\begin{proof}
	\vskip-5pt
		(\textit{sketch}) The proof is similar to those found in \cite{tamar2012policy,chow2017risk}. It is based on representing $\theta$ and $\lambda$ update rules with a two-time-scale  stochastic approximation algorithm. For each timescale, the algorithm can be shown to converge to the stationary points of the corresponding continuous-time system. Finally, it can be shown that the fixed point is, in fact, a locally optimal point. In Appendix \ref{sec:app-conv}, we provide a formal proof of this theorem.
	\end{proof}
	\begin{corollary}\label{cor-stationary}
	Under Assumptions \ref{assumption:feas-optr}, \ref{assumption:differentiable},  and \ref{assump:step}, the sequence of policy updates and Lagrange multipliers converges globally to a stationary point of the Lagrangian almost surely. Moreover, if $\theta^*$ is in the interior of $\Theta$, then $\theta^*$ is a feasible first order stationary point of \eqref{opt-r}, i.e., $\nabla_\theta V_\theta(x_0)|_{\theta=\theta^*}=0$ and $D_{KL}(\theta^*\;\|\;\phi)\leq \delta$.
	\end{corollary}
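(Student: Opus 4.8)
The plan is to reuse the two-time-scale stochastic approximation machinery already developed for Theorem~\ref{thm:main}, while observing that the local-initialization hypothesis there is used only to \emph{refine} the limit from a stationary point to a local optimum; convergence to the set of stationary points itself is a global statement that holds for any start in $\Theta\times[0,\infty)$. First I would recall that, under Assumption~\ref{assump:step}, the $\theta$- and $\lambda$-recursions form a coupled system in which $\lambda$ evolves on the slower time-scale. Viewing $\lambda$ as quasi-static, the fast iterate tracks the projected ODE
\[
\dot\theta = \Gamma_\Theta\!\left(-\nabla_\theta L(\theta,\lambda)\right),
\]
where $\Gamma_\Theta$ projects the drift onto the tangent cone of $\Theta$, while the slow iterate tracks $\dot\lambda = \Gamma_{[0,\infty)}\!\left(D_{KL}(\theta(\lambda)\,\|\,\phi)-\delta\right)$. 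The key departure from Theorem~\ref{thm:main} is that, without placing $\theta^0$ near a particular minimizer, the ODE method (Borkar's two-time-scale theorem together with the Kushner--Clark lemma, as in \cite{tamar2012policy,chow2017risk}) yields almost-sure convergence to the \emph{internally chain-transitive invariant set} of this coupled system, globally.

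Second I would show that this invariant set coincides with the equilibria, i.e.\ the stationary points of the Lagrangian. Here the Lagrangian acts as a Lyapunov function for the fast flow and the projected constraint gap as the guiding function for the slow flow: Assumption~\ref{assumption:differentiable} supplies the $\mathscr{L}$-Lipschitz smoothness of $\nabla_\theta L$, and compactness of $\Theta$ together with the bounded-horizon hypothesis $H\le H_{max}$ gives the stability needed for the ODE method and controls the martingale-difference noise in the gradient estimates. Consequently the iterates converge a.s.\ to a point $(\theta^*,\lambda^*)$ at which both projected drifts vanish:
\[
\Gamma_\Theta\!\left(-\nabla_\theta L(\theta^*,\lambda^*)\right)=0,
\qquad
\Gamma_{[0,\infty)}\!\left(D_{KL}(\theta^*\,\|\,\phi)-\delta\right)=0 .
\]
The second equality is exactly the complementarity condition: either $\lambda^*>0$ with $D_{KL}(\theta^*\,\|\,\phi)=\delta$, or $\lambda^*=0$ with $D_{KL}(\theta^*\,\|\,\phi)\le\delta$.

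Finally I would establish the ``moreover'' claim. When $\theta^*$ lies in the interior of $\Theta$, the tangent cone is all of $\mathbb{R}^{N}$, so $\Gamma_\Theta$ is the identity and the first stationarity condition collapses to $\nabla_\theta L(\theta^*,\lambda^*)=0$, that is,
\[
\nabla_\theta V_\theta(x_0)\big|_{\theta=\theta^*} + \lambda^*\,\nabla_\theta D_{KL}(\theta^*\,\|\,\phi)=0 .
\]
Together with the complementarity relation and primal feasibility $D_{KL}(\theta^*\,\|\,\phi)\le\delta$, this is precisely a feasible first-order (KKT) point of \eqref{opt-r} in the sense of \cite{bertsekas1999nonlinear}; in the regime where the limiting constraint is inactive ($\lambda^*=0$), it specializes to the stated conditions $\nabla_\theta V_\theta(x_0)|_{\theta=\theta^*}=0$ and $D_{KL}(\theta^*\,\|\,\phi)\le\delta$.

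The main obstacle, relative to Theorem~\ref{thm:main}, is that the local Lyapunov argument certifying a local \emph{minimum} is no longer available, so I can conclude stationarity but not optimality. The delicate technical step is to verify that the chain-transitive invariant set of the coupled ODE contains no spurious limit cycles or non-equilibrium recurrent behavior, so that almost-sure convergence is genuinely to an equilibrium; this relies on strict Lyapunov descent of $L$ along the fast flow (via Assumption~\ref{assumption:differentiable}) and on careful handling of the boundary behavior induced by the two projections $\Gamma_\Theta$ and $\Gamma_{[0,\infty)}$.
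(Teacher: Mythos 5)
Your overall route is the same as the paper's: Appendix \ref{sec:app-coro-station} likewise observes that the LaSalle/invariant-set argument from the proof of Theorem \ref{thm:main} already gives global almost-sure convergence to the largest invariant sets $\mathbb{M}_\theta$ and $\mathbb{M}_\lambda$ on which the projected drifts vanish, then invokes feasibility \eqref{eq:feasibility} and complementary slackness \eqref{eq:complementary}, and finally uses interiority of $\theta^*$ to drop the projection operator. Up to that point your proposal matches the paper essentially step for step.

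The gap is in the final step. In the interior you correctly obtain only the KKT condition $\nabla_\theta V_\theta(x_0)|_{\theta=\theta^*} + \lambda^*\,\nabla_\theta D_{KL}(\theta\,\|\,\phi)|_{\theta=\theta^*} = 0$, and you recover the corollary's claim $\nabla_\theta V_\theta(x_0)|_{\theta=\theta^*}=0$ only in the regime $\lambda^*=0$. The statement, however, asserts this unconditionally. The paper closes the $\lambda^*>0$ case by noting that complementary slackness forces $D_{KL}(\theta^*\,\|\,\phi)=\delta$ and then asserting that this implies $\nabla_\theta D_{KL}(\theta\,\|\,\phi)|_{\theta=\theta^*}=0$, so that the second term in $\nabla_\theta L$ drops out. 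You do not supply this step --- and, to be fair, it is not obviously valid (a constraint being active at $\theta^*$ does not in general make its gradient vanish there), so the more conservative KKT conclusion is what your argument actually delivers. As written, though, your proof establishes the displayed condition of the corollary only when the limiting constraint is inactive, which is a genuine shortfall relative to the literal statement; you should either reproduce (and justify) the paper's claim that $\nabla_\theta D_{KL}(\theta\,\|\,\phi)|_{\theta=\theta^*}=0$ at an active constraint, or note explicitly that the general conclusion is the KKT stationarity condition.
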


	Theorem \ref{thm:main} and Corollary \ref{cor-stationary} are also valid for the forward KL constraint case, as we discuss in Appendix \ref{sec:equivalence}.\vspace{-5pt} 

	\section{Practical PDPG Algorithm }\label{sec:prac-main}
\vskip-5pt	 Although the algorithm presented in the previous section is proved to converge to a first-order stationary point, it cannot directly serve as a practical learner algorithm. The main reason is that it produces a high-variance approximation of the gradients, which would lead to unstable learning. In this section, we propose several approximations to the theoretically-justified PDPG in order to develop a more practical algorithm. {For this algorithm, we will consider the forward definition of KL-divergence due to the mean-covering property}.
	
	One source of variance is the reward bias, which can be handled by adding a critic, similar to \cite{konda2000actor}.  Our next adjustment is to use an approximation of the step-wise KL-divergence, defined as 
	\begin{align}
	    \hat{D}^{step}_{KL} (\phi \;\|\; \theta) = \tfrac{1}{N} \textstyle{\sum}_{j=1}^N \textstyle{\sum}_{t=0}^H {D}^{step}_{KL}\bigl(\pi_T(\cdot|x_t;\phi)\;\|\;\pi_S(\cdot|x_t;\theta)\bigr) \label{eq:kl-step-app}  
	 \end{align}
	 where 
	    ${D}^{step}_{KL}\bigl(\pi_T(\cdot|x_t;\phi)\;\|\;\pi_S(\cdot|x_t;\theta)\bigr) = \textstyle{\sum}_{a\in\mathcal{A}}  \pi_T(a|x_t;\phi) \log\tfrac{\pi_T(a|x_t;\phi)}{\pi_S(a|x_t;\theta)}\nonumber.$
	As we discuss in Appendix \ref{sec:measure}, using \eqref{eq:kl-step-app} results in a much smaller variance, while still ensuring the convergence results. Intuitively, this equation suggests that instead of computing the trajectory probabilities and then computing the KL-divergence, as in \eqref{eq:traj-wise-reverse}, one can compute the KL in every visited state along a trajectory and sum them up. In addition to this change, we will further normalize each ${D}^{step}_{KL}$ by its trajectory length $H$ to remove the effect of the variable horizon length. The latter modification will lead to more sensible KL values and will make the choice of $\delta$ easier.

    A second difficulty with the algorithm in Section \ref{sec:pdpg} is that, unlike conventional policy gradient algorithms, there is no guarantee that the student's optimal policy is a deterministic one. In fact, in most of our experiments, it happens that the optimal policy is stochastic, especially when the teacher's policy itself is stochastic. To illustrate this, consider two scenarios: \textit{i}) The student refuses to do the suggested action of a deterministic teacher. In this case, she would incur an infinite cost as a result of her disobedience, so the problem will be infeasible. \textit{ii}) The teacher is less informative and has no clue about most of the state space, so often takes random actions. Trying to emulate this teacher would cause degraded performance for the student as well, so the student would also take many less informed actions. 
    
    A stochastic optimal policy is usually not desirable since it poses major safety and reliability challenges, so our next adjustments are an attempt to address this issue. One possible mitigation for the first scenario might be using a bounded distance measure such as Hellinger \cite{cramer1946mathematical} instead of KL-divergence, but our numerical experiments did not confirm that this is effective. We observe that by using the Hellinger constraint, the total entropy of the student's policy stays high, without any improvement in the student's policy. Instead, we propose using \textit{percentile KL-clipping}, which is defined as
    \begin{align}
        \textsc{clip}_\rho\left({D}^{step}_{KL}\right) = \max \{\rho\text{\% percentile of all ${D}^{step}_{KL}$s at time $t$},{D}^{step}_{KL}\} 
    \end{align}
    
    In fact, the \textsc{clip} function enables the student to totally disagree with the teacher in $\rho$\% of the visited states, without receiving an extremely large penalty. Selecting the values for $\rho$ depends on our perception about how perfect the teacher is. Setting $\rho$ close to 100 means that we believe in the teacher's suggestions. As we decrease $\rho$, we rely less on the teacher and can disobey more freely.
    
    The last major modification is to control the expected entropy at a certain small level $\delta^{ent}>0$, i.e.,
    \begin{align}
        ent(\theta) \coloneqq - \mathbb{E}_{x}\left[ \tfrac{1}{H}\textstyle{\sum}_{t=0}^H \textstyle{\sum}_{a\in\mathcal{A}}  \pi_S(a|x;\theta) \log{\pi_S(a|x;\theta)}\right]=\delta^{ent}. \label{eq:ent-constraint}
    \end{align}
    The justification for adding \eqref{eq:ent-constraint} is that we would like the optimal policy to be close to a deterministic one as much as possible. By setting a small value for $\delta^{ent}$, we can enforce this property. Also, this constraint tries to avoid having a deterministic policy in intermediate training steps, in order to allow more exploration. To add this constraint, we use the same Lagrangian technique, adding an extra term to the Lagrangian function:
    \begin{align}
        L(\theta,\lambda,\zeta)
	    \coloneqq 
	    V_{\theta}(x_0) + \lambda \left({D}_{\text{KL}}(\phi\;\|\;\theta) -\delta\right) + \zeta \left(ent(\theta) -\delta^{ent} \right).
    \end{align}{}
    All of these modifications, along with a few others, are summarized in Algorithm \ref{alg:ppg} of Appendix \ref{app:ppg}.\vspace{-5pt}

	\section{Experiments}
\vskip-5pt	We illustrate the efficiency of the proposed methods with multiple GridWorld experiments. In the first set of experiments, the teacher tries to teach the student to perform an oscillating maneuver around the walls. In the second set, we study how the student can comprehend changes in the environment change and utilize them to increase its rewards.\vspace{-5pt}
    
    \subsection{Square-Wave Teacher}
\vskip-5pt    In this experiment, we consider a teacher who gives a suggestion in every state of a GridWorld. We study two variants of the teacher, one who is very determined about all of his suggestions and the other who is less confident. Figure \ref{fig:env_zig} illustrates the environment and both teachers' suggestions. A student wants to find a path from the {blue} state to the {green} target. Each step has a reward $-1$ and reaching the target brings $+100$ reward. If the student wants to act independently, the optimal path is a trivial horizontal line. However, our objective is to force the student to ``listen'' to her teacher up to some level.

		\begin{figure*}[htbp]
		\centering
		\begin{subfigure}[t]{0.32\textwidth}
			\centering
			\includegraphics[width=\textwidth,trim=.1cm .1cm .1cm .1cm,clip]{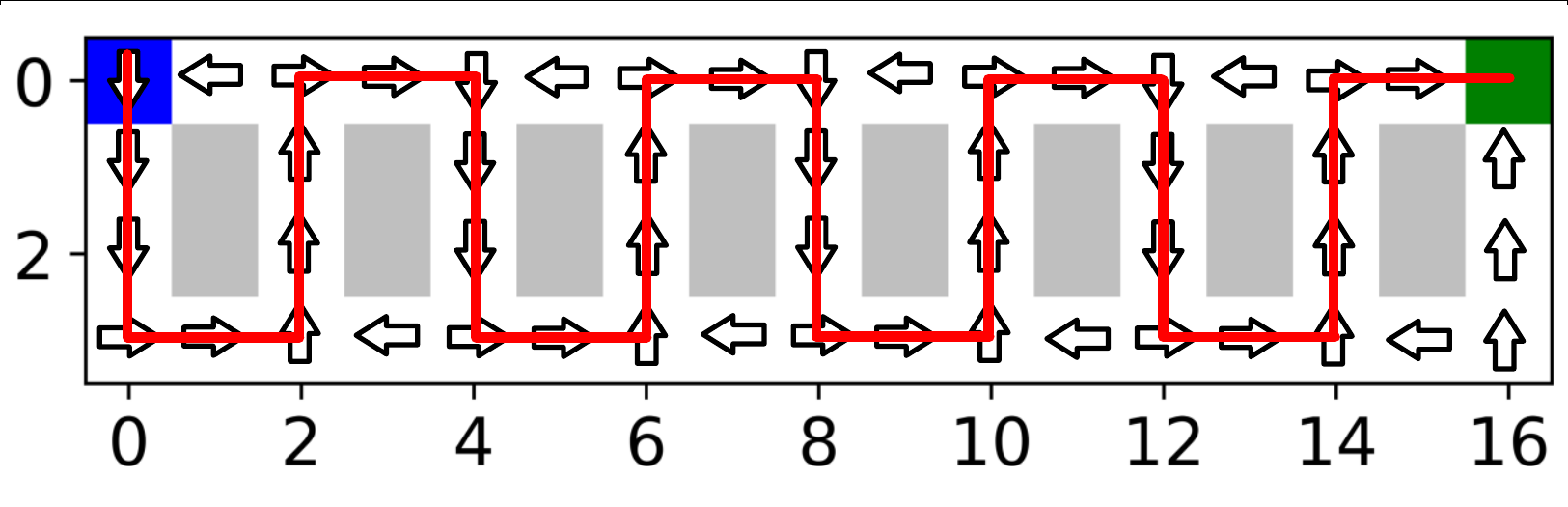}
			\caption{``Determined'' teacher with the corresponding suggested actions for every state. The red line shows the teacher's suggested path to target }
			\label{fig:env_zig_1}
		\end{subfigure}%
		\hspace{.1cm}
		\begin{subfigure}[t]{0.32\textwidth}
			\centering
			\includegraphics[width=\textwidth,trim=.1cm .1cm .1cm .1cm,clip]{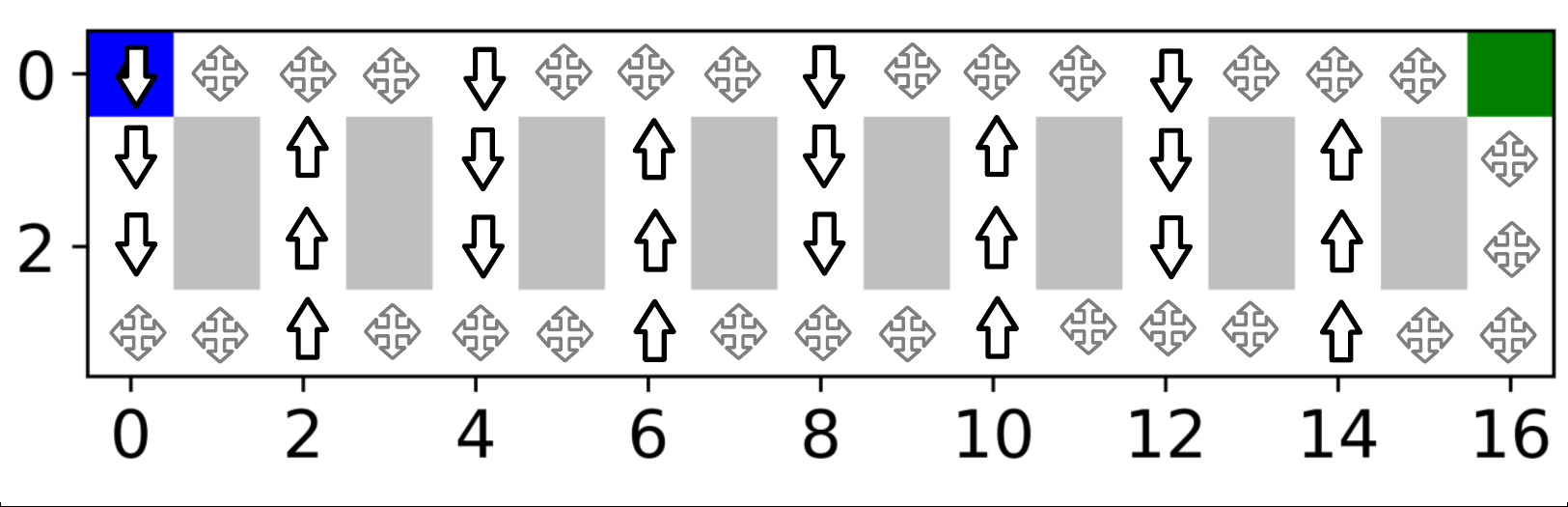}
			\caption{``Less confident'' teacher with deterministic actions in a subset of states and uniformly random actions in the rest}
			\label{fig:env_zig_2-1}
		\end{subfigure}%
	\hspace{.1cm}
		\begin{subfigure}[t]{0.32\textwidth}
			\centering
			\includegraphics[width=\textwidth,trim=.1cm .1cm .1cm .1cm,clip]{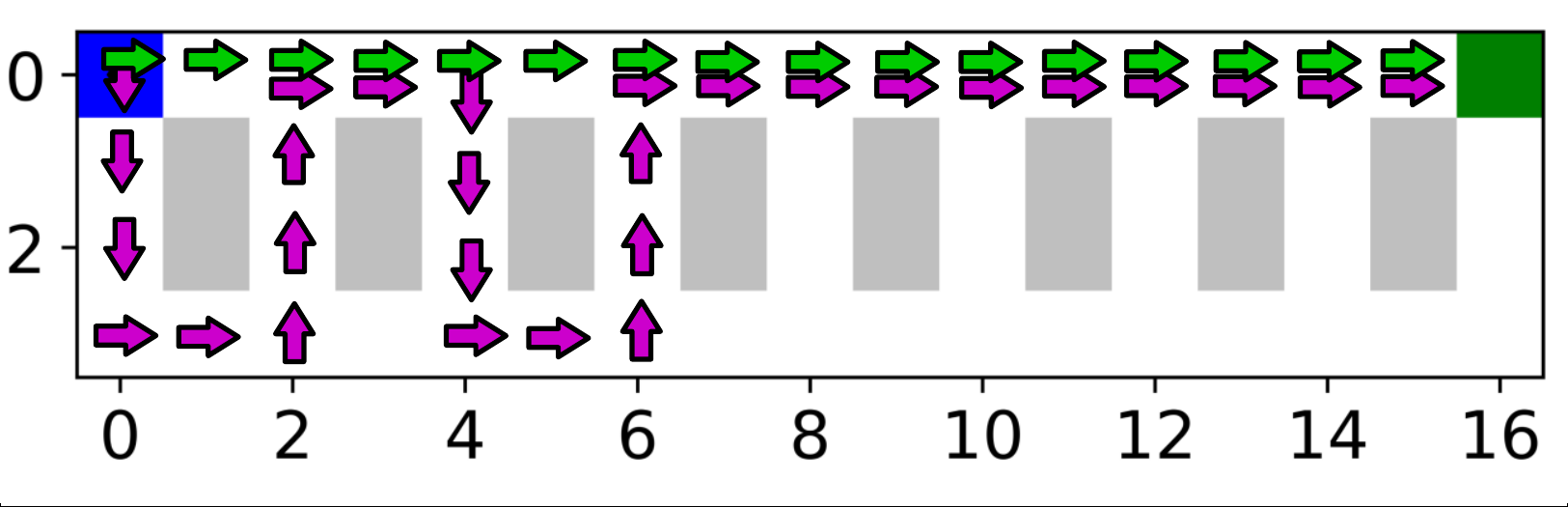}
			\caption{Optimal Path (in green) versus a sample path found by PDPG (in purple) with $\delta=0.2$ and $\rho=8$}
			\label{fig:env_zig_1_opt}
		\end{subfigure}
		
		\caption{Two different teachers with suggested actions and optimal path.} 
		\label{fig:env_zig}
		\vskip-5pt
	\end{figure*}
	
	\textit{Determined Teacher}:
	In this part, a teacher has a preferred action in every state with a probability of around 98\%. As we observe in Figure \ref{fig:env_zig_1}, these suggestions might help the student in reaching the target, but they are inefficient. For instance, if the student follows a square-wave sequence of actions as illustrated by the red line, she will be able to reach the target while following all of the teacher's suggestions exactly. Our objective is to allow the student deviate from the teacher for a few steps, to find shorter routes.

	By using PDPG, the student is able to find policies that are a mixture of the horizontal path and the square-wave route. For example, in Figure \ref{fig:env_zig_1_opt}, we have illustrated an instance of the student's optimized path with $\delta=0.2$ . The extent to which either policy is followed depends on values of $\delta$ and the KL-clipping parameter $\rho$. Figure  \ref{fig:rew-delta} illustrates the student's total reward for different $\delta$ quantities without KL-clipping. We observe that as we increase $\delta$, we allow the student to act more freely, hence she gets a higher reward. However, after 5000 training iterations,  the reward remains at the same level with too much oscillation. Recalling the discussions of Section \ref{sec:prac-main}, this behavior indicates convergence to a stochastic policy. 

	To reduce the oscillating behavior, we proposed adding an entropy constraint and KL-clipping. Figure \ref{fig:ent-2} shows how adding the entropy constraint results in a more deterministic (i.e., lower entropy) policy. Also, in Figure \ref{fig:rew-rho}, we have added KL-clipping. As we decrease $\rho$, the student can totally disagree with the teacher in a larger proportion of the visited states, so she can find better policies with higher rewards. For different values of $\rho$, we see that the policy can converge to either a stochastic or deterministic one. For $\rho=70,75$, it converges to a deterministic horizontal line policy. With  $\rho=80,85$, it learns to deterministically follow one $\sqcup$-shaped path followed by a horizontal route, and for $\rho=95$, it follows a $\mathrlap{\mathrlap{\sqcup}\hspace{.18cm}\sqcap}\hspace{.36cm}\sqcup$-shaped path with a horizontal line at the end. Notice that even with $\rho=100$, which means no clipping, the student is not exactly following the teacher. We also observe that for $\rho=90$, it fails to converge to a deterministic policy. One justification for such a failure is that the student's policy is far better than the less-rewarding deterministic one, but not good enough to get to the next level of performance. Finally, Figure \ref{fig:conv-lam} shows how the $\lambda$ and $\zeta$ values  converge to their optimal values.

	\textit{Less Confident Teacher}:
	This experiment is designed to illustrate how a less confident teacher can still teach the student to follow some of his suggestions, but it will yield a lower level of confidence of the student. Figure \ref{fig:env_zig_2-1} shows the suggested actions of the teacher;  he is deterministic only in a subset of the states. For the rest, he does not have any information, so he suggests actions uniformly at random.  The less confident teacher still has the square wave as the general idea (which is bad, just like the determined teacher), but also has extra randomness that points the student in even worse directions. In other words, the less confident teacher has a worse policy overall than the determined one. Recommending random actions causes the student to have more volatile behavior. We can observe this fact by comparing Figure \ref{fig:part_rew-delta} with \ref{fig:rew-delta}, where the student's converged policy produces a wider range of rewards for the less-confident teacher's case. Also, the average reward for this case is slightly lower, which can be explained by the inadequate information that the less-confident teacher provides for solving the task. \vspace{-5pt}
	
	Figure \ref{fig:part_rew-rho} shows that adding KL-clipping helps in reducing the volatility, but one needs to choose a much smaller value for $\rho$ (compare it with Figure \ref{fig:rew-rho}). Yet, even a small $\rho$ does not necessarily result in a deterministic policy; for $\rho$ as small as $0.4$, the student has converged to a stochastic one.
    	\begin{figure*}[htbp]
		\centering
\vskip-5pt
		\begin{subfigure}[t]{0.23\textwidth}
			\centering
			\includegraphics[width=\textwidth,trim=.1cm .1cm .1cm .1cm,clip]{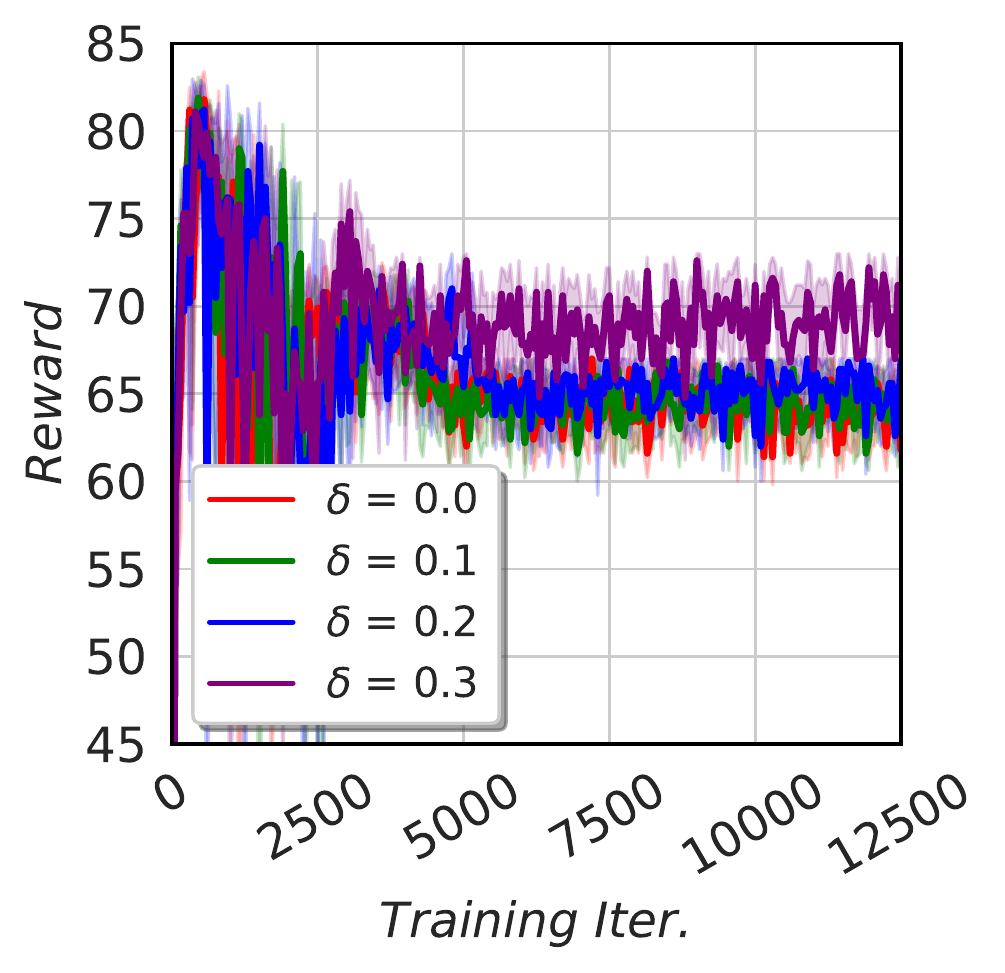}
			\caption{The effect of $\delta$ on reward; no KL-clipping}
			\label{fig:rew-delta}
		\end{subfigure}
		\hspace{.1cm}
		\begin{subfigure}[t]{0.23\textwidth}
			\centering
			\includegraphics[width=\textwidth,trim=.1cm .1cm .1cm .1cm,clip]{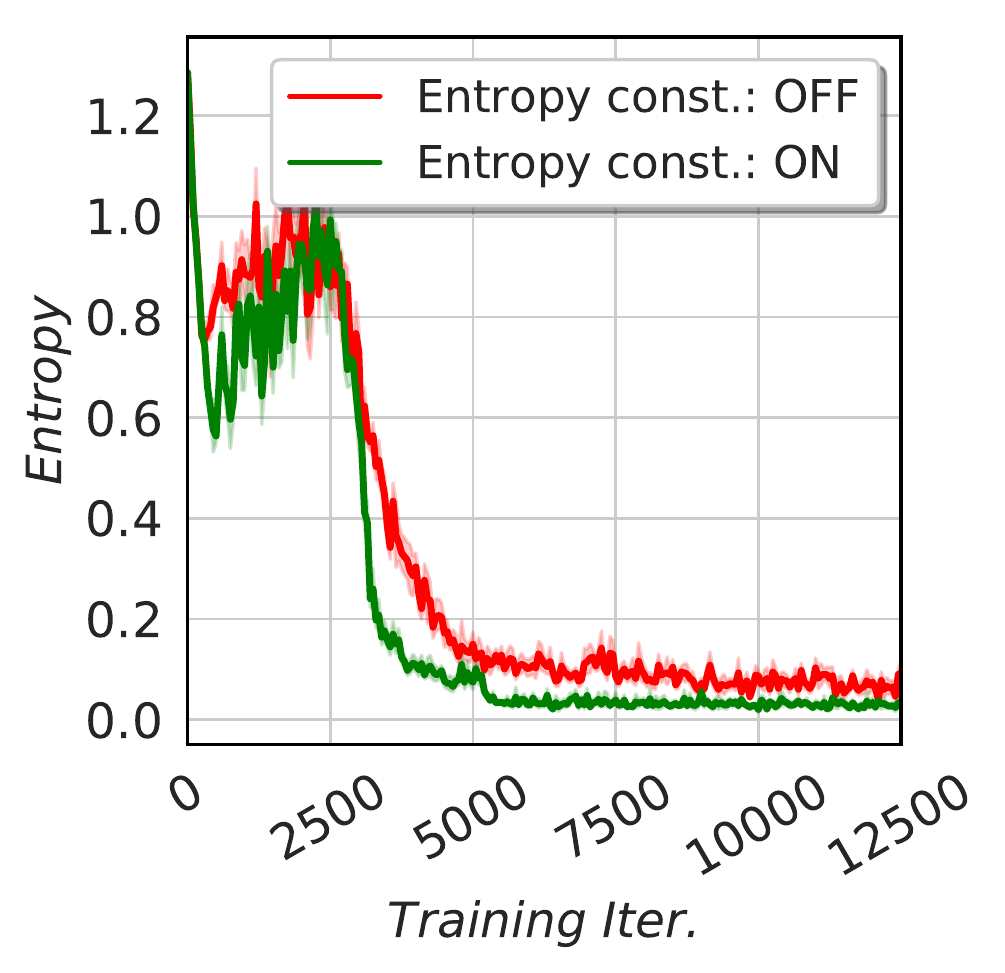}
			\caption{The effect of entropy constraint; $\delta=0.2$}
			\label{fig:ent-2}
		\end{subfigure}
		\hspace{.1cm}
		\begin{subfigure}[t]{0.23\textwidth}
			\centering
			\includegraphics[width=\textwidth,trim=.1cm .1cm .1cm .1cm,clip]{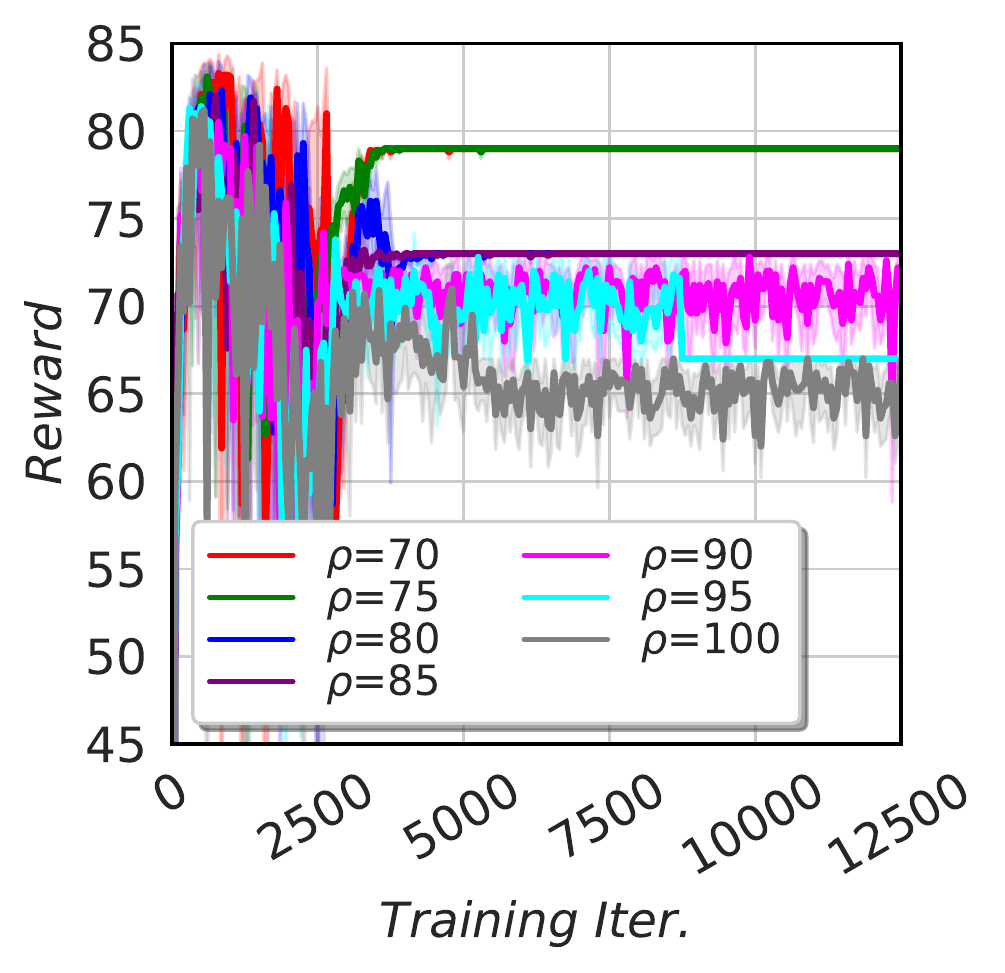}
			\caption{Total reward for different $\rho$ and $\delta=0.2$}
			\label{fig:rew-rho}
		\end{subfigure}
		\hspace{.1cm}
		\begin{subfigure}[t]{0.23\textwidth}
			\centering
			\includegraphics[width=\textwidth,trim=.1cm .1cm .1cm .1cm,clip]{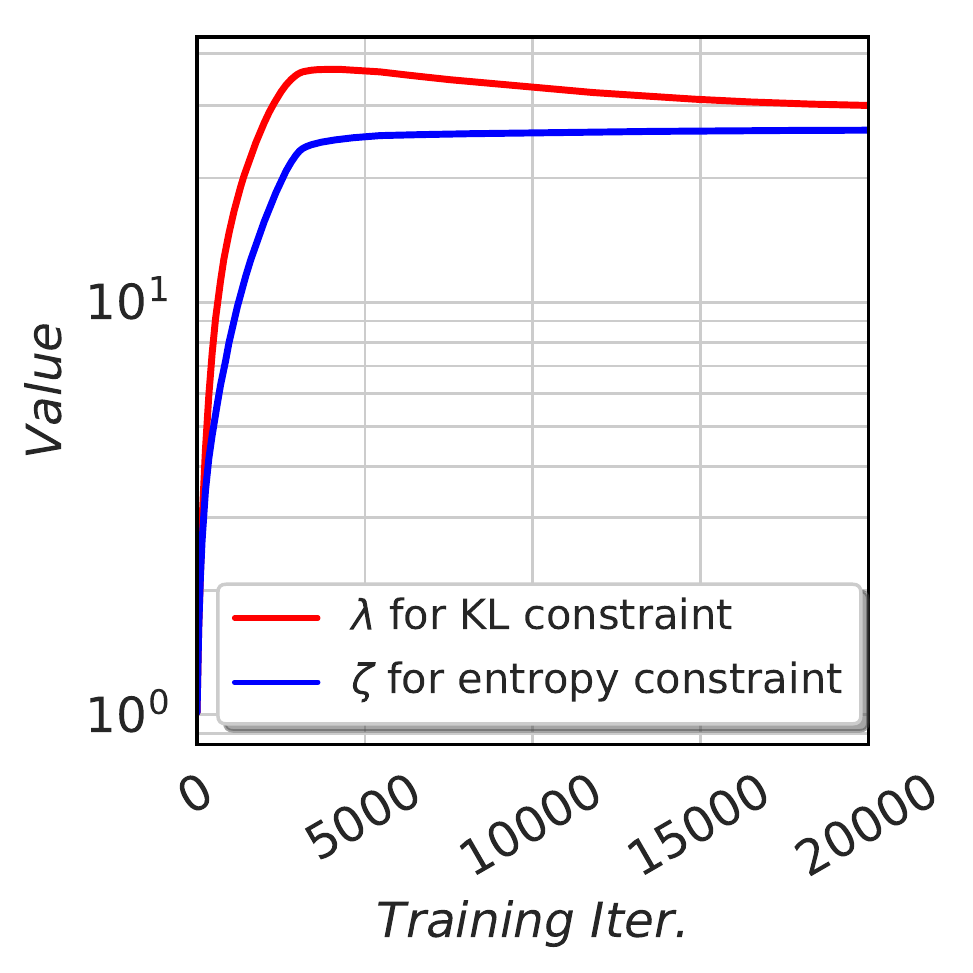}
			\caption{Convergence of $\lambda$ and $\zeta$; $\delta=0.2$ }
			\label{fig:conv-lam}
		\end{subfigure}%
		\caption{Performance of a student learning from the deterministic teacher}
		\label{fig:env_zig_res}
    \begin{minipage}[t]{0.49\textwidth}
		\centering
		\begin{subfigure}[t]{0.48\columnwidth}
			\centering
			\includegraphics[width=\columnwidth,trim=.1cm .1cm .1cm .1cm,clip]{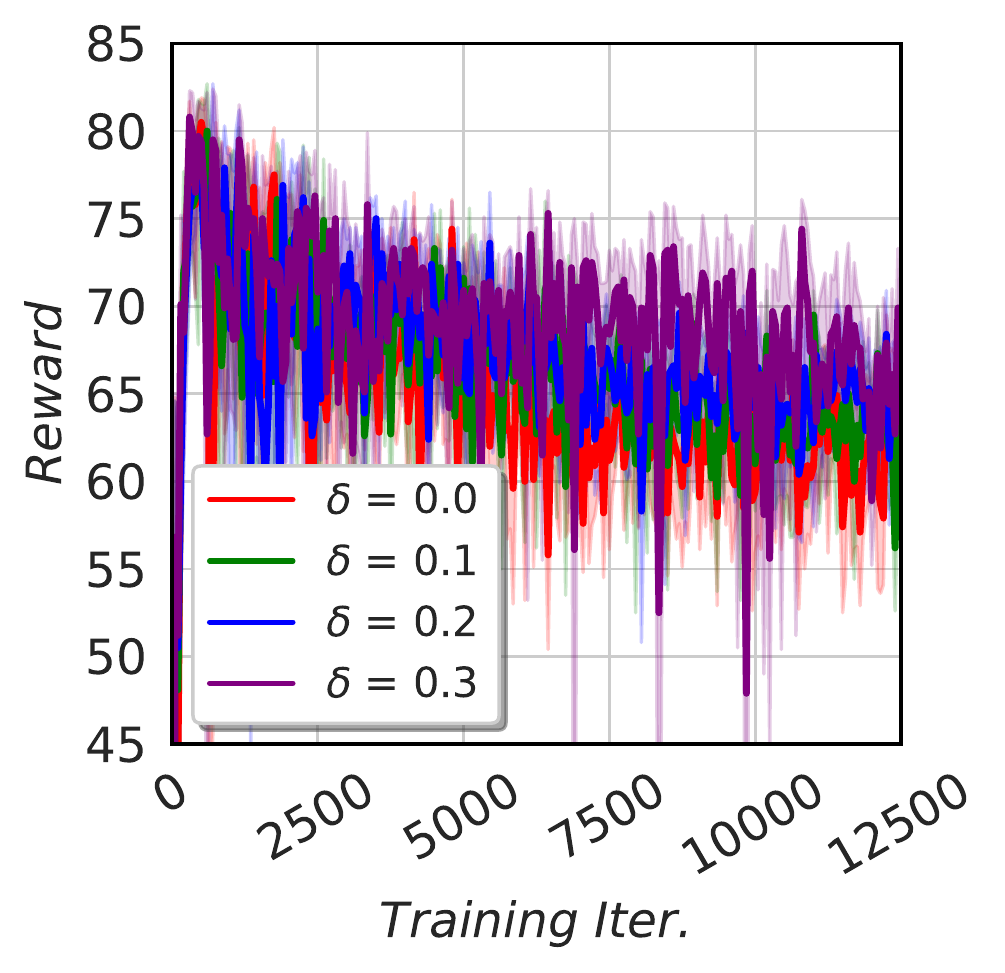}
			\caption{The effect of $\delta$ on reward; no KL-clipping}
			\label{fig:part_rew-delta}
		\end{subfigure}
		\hspace{.1cm}
		\begin{subfigure}[t]{0.48\columnwidth}
			\centering
			\includegraphics[width=\columnwidth,trim=.1cm .1cm .1cm .1cm,clip]{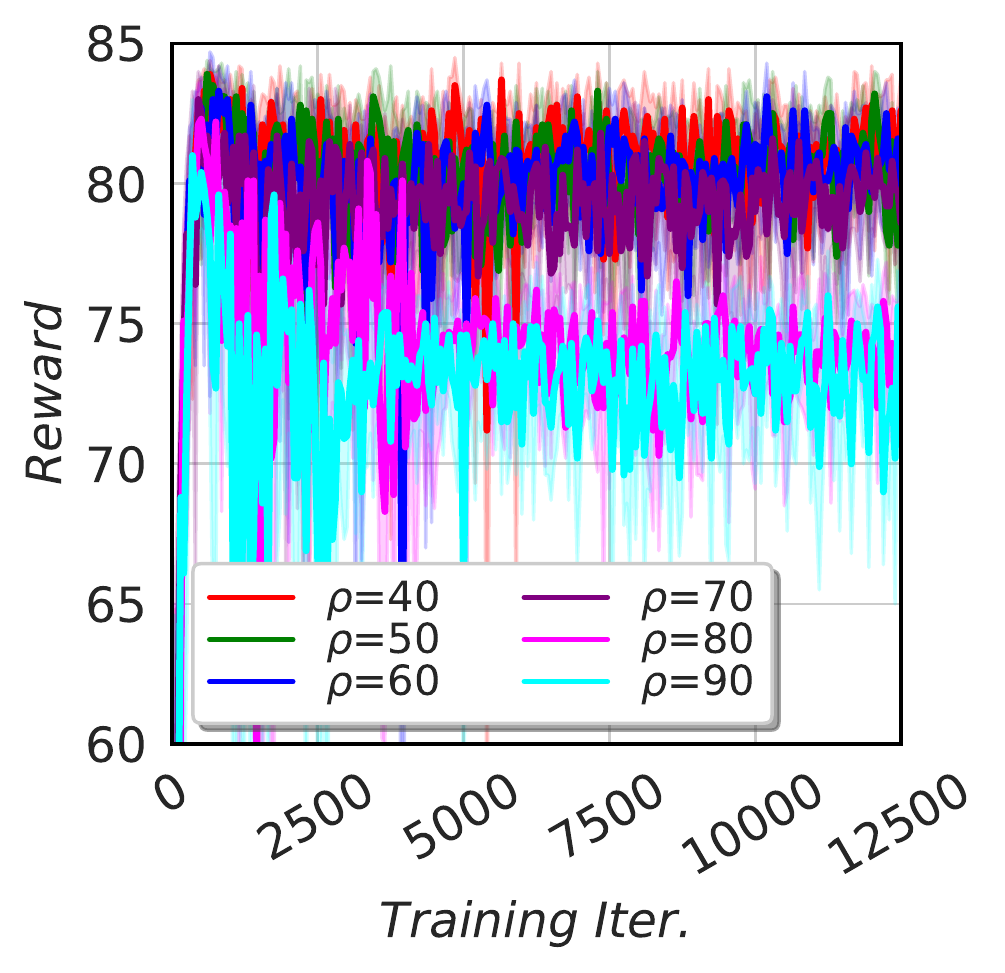}
			\caption{Total reward for different $\rho$ and $\delta=0.2$}
			\label{fig:part_rew-rho}
		\end{subfigure}
		\caption{Performance of a student learning from the less confident teacher}
		\label{fig:part_env_zig_res}
		\end{minipage}
		\hfill
	    \begin{minipage}[t]{0.49\textwidth}
		\centering
		\begin{subfigure}[t]{0.46\columnwidth}
			\centering
			\includegraphics[width=\columnwidth,trim=.1cm .1cm .1cm .1cm,clip]{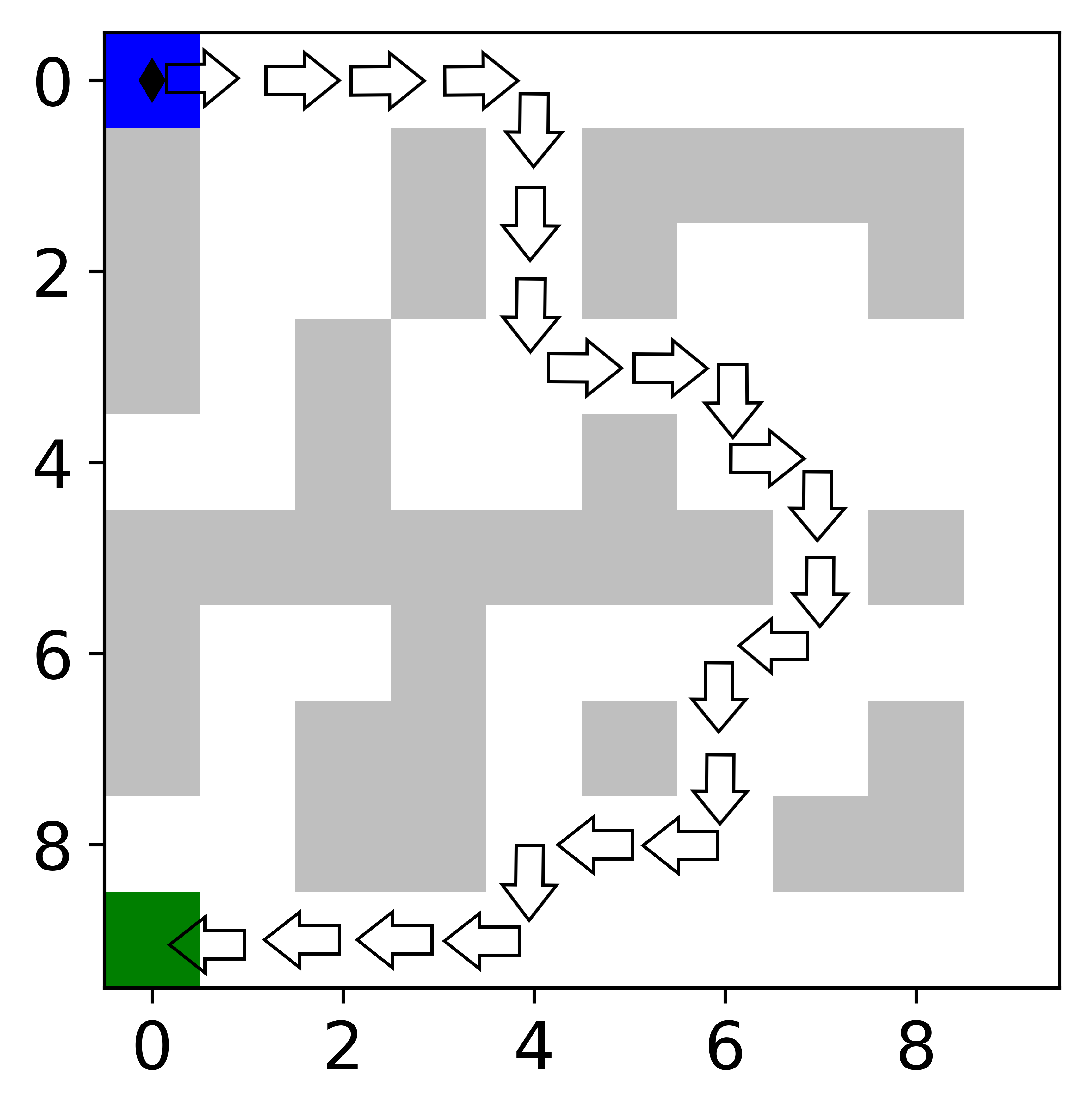}
			\caption{Teacher's environment. The optimal path found by the RL is demonstrated}
			\label{fig:env_T}
		\end{subfigure}%
	\hspace{.1cm}
		\begin{subfigure}[t]{0.46\columnwidth}
			\centering
			\includegraphics[width=\columnwidth,trim=.1cm .1cm .1cm .1cm,clip]{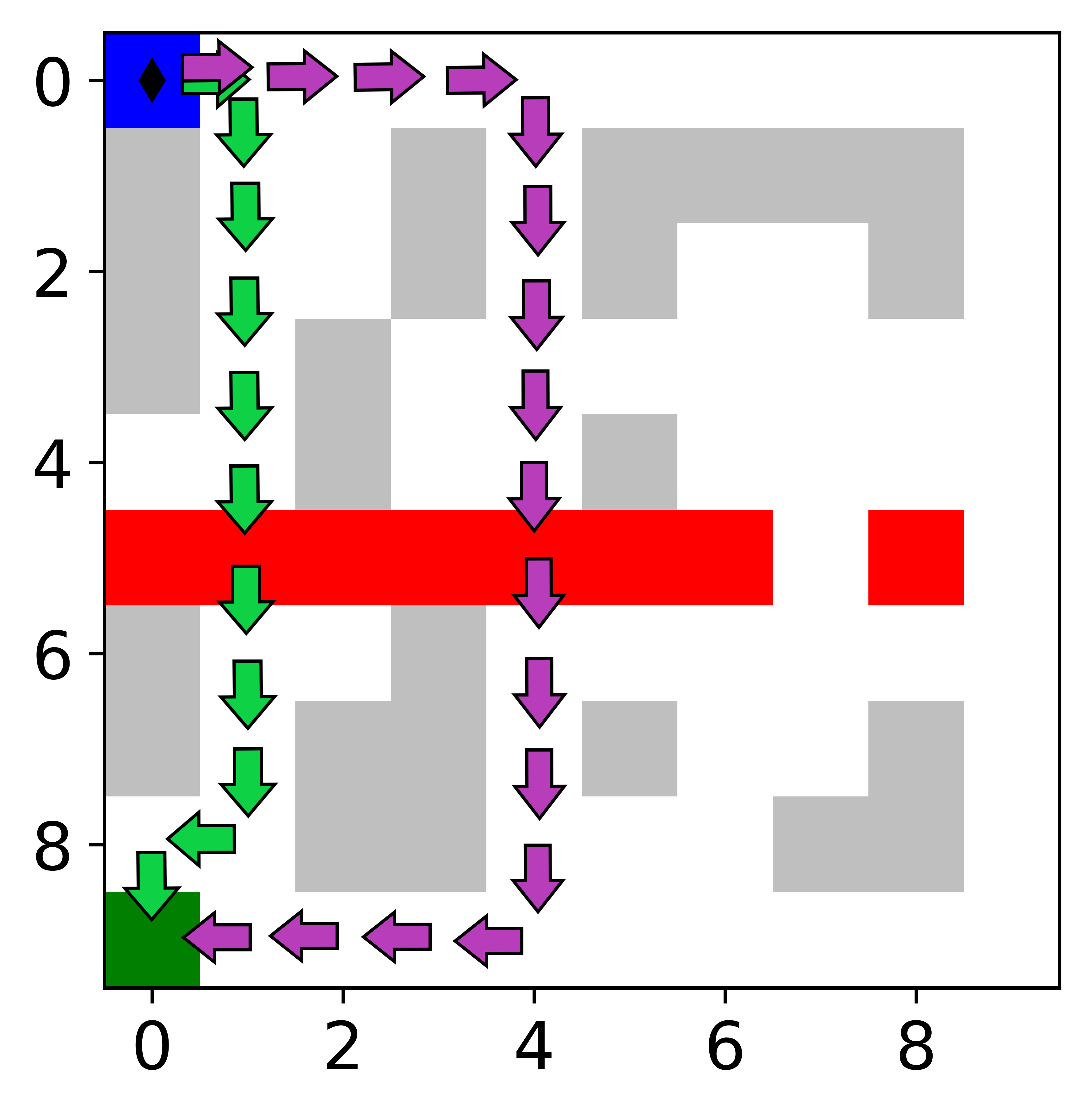}
			\caption{Student's environment. She can leap the red walls at a penalty. The paths found by RL (in green) and by PDPG (in purple) are illustrated}
			\label{fig:env_S}
		\end{subfigure}
		
		\caption{Teacher and student environments as well as their policies}
		\label{fig:env_zip_part}
		\end{minipage}
	\end{figure*}
	
	\subsection{Wall Leaping}
	\vskip-5pt
    The purpose of this experiment is to show that PDPG can act as an improvement method when the student encounters a slightly modified environment. The teacher's reward structure is similar to the structure in the previous experiment, i.e., $-1$ for every step and $+100$ for reaching the target. However, the student comprehends that she can leap over some of the walls with a reward of $-2$. We use a vanilla policy gradient algorithm to train the teacher, which provides paths like the one illustrated in Figure \ref{fig:env_T}. If we allow the student to learn without any constraint, it will find the green path in Figure \ref{fig:env_zip_part} with a KL-divergence of $\approx 0.89$. However, this is not what we are looking for since it is extremely different from the teacher. Instead, we use the PDPG algorithm to constrain the policy deviation with $\delta=0.3$. Using this parameter, the student learns to follow the purple path, with a KL-divergence of $\approx 0.23$.  \vspace{-5pt}

	\section{Related Work}
	\vskip-5pt
 Learning from a teacher is a well-studied problem in the literature on supervised learning  \cite{girshick2014rich} and imitation learning \cite{schaal1999imitation,thomaz2006reinforcement}. However, we are not aware of any work using a teacher to control specific behaviors of a student. The typical use case of a student--teacher framework in RL is in ``policy compression,'' where the objective is to train a student from a collection of well-trained RL policies. Policy distillation \cite{rusu2015policy} and actor--mimic \cite{parisotto2015actor}  are two methods that distill the trained RL agents, in a supervised learning fashion, into a unified policy of the student. In contrast, we follow a completely distinct objective, where a student is continually interacting with an environment and it only uses the teacher's signals as a guideline for shaping her policy.

Closest to ours, \citet{schmitt2018kickstarting} propose ``kickstarting RL,'' a method that uses the teacher's information for better training. Incorporating the idea of population-based training, they design a hand-crafted decreasing schedule of Lagrange multipliers, $\{\lambda^k\}\to 0$. Nevertheless, the justification for such a schedule is not clearly visible. However, noticing that their problem is a special case of ours with $\delta = \infty$, our findings confirm the credibility of their approach, i.e., our findings indicate that $\lambda^* = \lim_{k\to\infty}\lambda_{min}^k = 0$ according to strong duality. This observation also conforms with the experimental findings of \cite{schmitt2018kickstarting}, and our theoretical results indicate that when there is no obligation on being similar to the teacher, the student is better off eventually operating independently.  Similarly, their method only uses the teacher for faster learning.

    Imposing certain constraints on the behavior of a policy is also a common problem in the context of ``safe RL'' \cite{achiam2017constrained,leike2017ai, chow2018lyapunov}. Typically, these problems look for policies that avoid hazardous states either during training or execution. Our problem is different in that we follow another type of constraint, yet similar methods might be applied. Using a domain-specific programming language instead of neural networks can be an alternative method to add interpretability \cite{verma2018programmatically}, but it lacks the numerous advantages inherent in end-to-end and differentiable learning. In an alternative direction, it is also possible to manipulate the policy shape by introducing auxiliary tasks or reward shaping \cite{jaderberg2016reinforcement}. Despite the simplicity of the latter approach, it has a very limited capability. For example, it is unclear how  reward shaping can suggest directions similar to our square-wave teacher. In summary, we believe that our end-to-end method, by implicitly adding interpretable components, can partially alleviate the concerns related to the RL policies. \vspace{-5pt}

    \section{Concluding Remarks}
    \vskip-5pt
    In this paper, we introduce a new paradigm called corrective RL, which allows a ``student'' agent to learn to optimize its own policy while also staying sufficiently close to the policy of a ``teacher.'' Our approach is motivated by the fact that practitioners may be reluctant to adopt the policies proposed by RL algorithms if they differ too much from the status quo. Even if the RL policy produces an impressive expected return, this may not be satisfactory evidence to switch the operation of a billion-dollar company to a policy found by an RL. We believe that corrective RL provides a straightforward remedy by constraining how far the new policy can deviate from the old one or another desired, target policy. Doing so will help reduce the stresses of adopting a novel policy. 
    

    We believe that, with further extensions, corrective RL has the potential to address to some of RL's interpretability challenges. Using more advanced optimization algorithms, studying different distance measures, considering continuous-action problems, and having multiple teachers represent fruitful avenues for future research.

 \section*{Acknowledgement}
This work was partially supported by the U.S. National Science Foundation, under award numbers NSF:CCF:1618717, NSF:CMMI:1663256 and NSF:CCF:1740796, and XSEDE IRI180020.

\bibliographystyle{unsrtnat}
\bibliography{references}

\newpage

    \appendix
    
    \section{PDPG Algorithms} 
        \subsection{Computing the Gradients}\label{sec:app-grad}

     The Lagrangian function in the optimization problem \eqref{eq:opt-lag} can be re-written as
     \begin{align}
          L(\theta,\lambda) =& \sum_{\tau\in\mathcal{T}} \mathbb{P}_\theta (\tau) J(\tau) +  \lambda  \sum_{\tau\in\mathcal{T}} \mathbb{P}_\theta (\tau) \log\frac{\mathbb{P}_\theta (\tau)}{\mathbb{P}_\phi (\tau)}- \lambda\delta\nonumber\\
          =&\sum_{\tau\in\mathcal{T}} \mathbb{P}_\theta (\tau) \left(J(\tau) +  \lambda  \log\frac{\mathbb{P}_\theta (\tau)}{\mathbb{P}_\phi (\tau)}\right)- \lambda\delta. \label{eq:lag-refed}
     \end{align}
     Recall that $\mathcal{T}$ is the set of all trajectories under all admissible policies. By taking the gradient of $L(\theta,\lambda)$ with respect to $\theta$, we have:
     \begin{align}
         \nabla_\theta L(\theta,\lambda) &= \sum_{\tau\in\mathcal{T}} \nabla_\theta \mathbb{P}_\theta (\tau) \left(J(\tau) +  \lambda  \log\frac{\mathbb{P}_\theta (\tau)}{\mathbb{P}_\phi (\tau)}\right) +\mathbb{P}_\theta (\tau) \left(\lambda\nabla_\theta\log \mathbb{P}_\theta (\tau)\right)\nonumber\\
         &=\sum_{\tau\in\mathcal{T}} \mathbb{P}_\theta (\tau)\nabla_\theta \log\mathbb{P}_\theta (\tau) \left(J(\tau) +  \lambda  \log\frac{\mathbb{P}_\theta (\tau) }{\mathbb{P}_\phi (\tau)}+ \lambda\right)\nonumber \\
         &=\mathbb{E}_{\mathcal{T}}\left[ \nabla_\theta \log\mathbb{P}_\theta (\tau) \left(J(\tau) +  \lambda  \log\frac{\mathbb{P}_\theta (\tau) }{\mathbb{P}_\phi (\tau)}+ \lambda\right)\right],\label{eq:nabla-lag-theta}
     \end{align}
     and the term $\nabla_\theta \log\mathbb{P}_\theta (\tau)$ can be simplified as
     \begin{align}
         \nabla_\theta \log\mathbb{P}_\theta (\tau) &= \nabla_\theta \left(\log P_0(x_0) + \sum_{t=0}^{H-1}\log P(x_{t+1}|x_t,a_t) + \sum_{t=0}^{H-1} \log \pi_S(a_t|x_t;\theta) \right) \nonumber\\
         &=\sum_{t=0}^{H-1} \nabla_\theta \log \pi_S(a_t|x_t;\theta)\nonumber\\
         &=\sum_{t=0}^{H-1} \frac{\nabla_\theta \pi_S(a_t|x_t;\theta)}{\pi_S(a_t|x_t;\theta)}.\label{eq:log_prob_p_pi}
     \end{align}
     The gradient of $L(\theta,\lambda)$ with respect to $\lambda$ is
     
     \begin{align}
         \nabla_\lambda L(\theta,\lambda) &=  \sum_{\tau\in\mathcal{T}} \mathbb{P}_\theta (\tau) \log\frac{\mathbb{P}_\theta (\tau)}{\mathbb{P}_\phi (\tau)}- \delta = D_{\text{KL}}(\mathbb{P}_\theta (\tau)\;\|\;\mathbb{P}_\phi (\tau))- \delta. \label{eq:nabla-lag-lambda}
     \end{align}
     
     By using a set of sample trajectories $\{\tau_j, j=1,\ldots,N\}$ generated under the student policy, one can approximate the gradients \eqref{eq:nabla-lag-theta} and \eqref{eq:nabla-lag-lambda} as 
     \begin{align}
         \nabla_\theta L(\theta,\lambda) &\approx \frac{1}{N}\sum_{j=1}^{N}\left[ \nabla_\theta \log\mathbb{P}_\theta (\tau_j) \left(J(\tau_j) +  \lambda  \log\frac{\mathbb{P}_\theta (\tau_j) }{\mathbb{P}_\phi (\tau_j)}+ \lambda\right)\right],\nonumber\\
         \nabla_\lambda L(\theta,\lambda) &\approx \hat{D}_\text{KL}(\theta\;\|\;\phi) - \delta =  \frac{1}{N}\sum_{j=1}^{N} \log\frac{\mathbb{P}_\theta (\tau)}{\mathbb{P}_\phi (\tau)}- \delta ,\nonumber  
     \end{align}
     which are the update rules that will be used later on, in \eqref{alg:eq:update-theta} and \eqref{alg:eq:update-lambda}.
     
    \subsection{ PDPG Algorithm  }\label{app:pdpg}
Having derived the gradients of the Lagrangian {(in Appendix \ref{sec:app-grad})}, we have all the necessary information for proposing our primal--dual policy gradient (PDPG) algorithm, which is described in Algorithm \ref{alg:PG}. 
    
    	\begin{algorithm}[htbp]
		\caption{Primal-Dual Policy Gradient (PDPG) Algorithm for \eqref{opt-r}}
		\label{alg:PG}
		\begin{algorithmic}[1]
			\STATE \textbf{input:} teacher's policy with weights $\phi$
			\STATE \textbf{initialize:} student's policy with $\theta^0$, possibly equal to $\phi$; initialize step size schedules $\alpha_1(\cdot)$ and $\alpha_2(\cdot)$
			\WHILE {TRUE}
			\FOR {$k = 0,1,\ldots$}
			\STATE following policy $\theta^k$, generate a set of $N$ trajectories $\mathcal{T}^k=\{\tau_j^k,\,j =1,2,\ldots,N \}$, each starting from an initial state $x_0 \sim P_0(\cdot)$
			\STATE \textbf{($\theta$-update)} update $\theta^k$ according to  \label{alg:pg:theta-step}
			\begin{align}
			\theta^{k+1} = \Gamma_{\Theta}\Bigl[\theta^{k} - \alpha_1(k)\Bigl( \frac{1}{N}\sum_{j=1}^{N} \nabla_{\theta}\log{\mathbb{P}_\theta(\tau^k_j)}\bigr|_{\theta=\theta^k}\bigl(J(\tau^k_j) + \lambda^k \log\frac{\mathbb{P}_\theta(\tau^k_j)}{\mathbb{P}_\phi(\tau^k_j)}+\lambda^k \bigr)\Bigr)\Bigr]\label{alg:eq:update-theta}
			\end{align}
			\STATE \textbf{($\lambda$-update)} update $\lambda^k$ according to
			\begin{align}
			\lambda^{k+1} = \Gamma_{\Lambda}\Bigl[\lambda^{k} + \alpha_2(k) \Bigl(\frac{1}{N}\sum_{j=1}^{N} \log\frac{\mathbb{P}_\theta(\tau^k_j)}{\mathbb{P}_\phi(\tau^k_j)}  -\delta \Bigr)\Bigr]\label{alg:eq:update-lambda}
			\end{align}
			\ENDFOR
			
			\IF {$\lambda^k$ converges to $\lambda_{max}$} \label{alg:pg:lambdamaxbeg}
			\STATE $\lambda_{max} \leftarrow 2 \lambda_{max}$
			\ELSE {}
			\STATE return $\theta$ and $\lambda$; break
			
			\ENDIF\label{alg:pg:lambdamaxend}
			
			\ENDWHILE
		\end{algorithmic}
	\end{algorithm}
    {After initializing the student with the teacher's policy, at each iteration $k$, we take a mini-batch of sample trajectories under the student's policy $\theta^k$}.  { In step \ref{alg:pg:theta-step}, we use the sampled trajectories to compute an  approximate gradient of the Lagrangian function with respect to $\theta$ and update the policy parameters} {in the negative direction of the approximate gradient} with step size $\alpha_1(k)$. In addition to policy parameter updates, the dual variables are learned concurrently using the recursive formula 
	\begin{align}
	\lambda^{k+1} = \lambda^{k} + \alpha_2(k) \left[ \hat{D}_\text{KL}(\theta\;\|\;\phi) - \delta \right],
	\end{align}
    where $\alpha_2(k)$ represents the associated step-size rule. 
    
    In this algorithm, we need to use two projection operators to ensure the convergence of the algorithm. Specifically, $\Gamma_\Theta$ is an operator that projects $\theta$ to the closest point in $\Theta$, i.e., $\Gamma_\Theta(\theta) = \argmin_{\hat{\theta}\in \Theta} \|\theta - \hat{\theta}\|^2$. Similarly, $\Gamma_\Lambda$ is an operator that maps $\lambda$ to the interval $\Lambda\coloneqq [0,\lambda_{max}]$. Finally, in steps \ref{alg:pg:lambdamaxbeg}--\ref{alg:pg:lambdamaxend}, we check whether $\lambda$ has converged to some point on the boundary. Such a convergence means that the projection space for the Lagrange multipliers is small, so we increment the upper bound and repeat searching for a better policy. 
	 
	\section{Convergence Analysis of PDPG for \textbf{\eqref{opt-r}}}\label{sec:app-grad-conv}

     Before starting the proof of Theorem \ref{thm:main}, noting the definition of $\nabla_\theta L(\theta,\lambda)$ 
     and $\nabla_\lambda L(\theta,\lambda)$
     , one can make the following observations:
     \begin{lemma}\label{lemma:lips}
     Under Assumption \ref{assumption:differentiable}, the following holds:
     \begin{enumerate}[label=\roman*)]
         \item \label{lem-lip-1} $\nabla_\theta \log\mathbb{P}_\theta (\tau)$ is Lipschitz continuous in $\theta$, which further implies that
         \begin{align}
             \|\nabla_\theta \log\mathbb{P}_\theta (\tau)\|^2\leq \kappa_1(\tau) \left(1+\|\theta\|^2\right) \label{eq:lem-lip-1}
         \end{align}
         for some $\kappa_1(\tau)<\infty$.
         \item \label{lem-lip-2} $\nabla_\theta L(\theta,\lambda)$ is Lipschitz continuous in $\theta$, which further implies that
          \begin{align}
             \|\nabla_\theta L(\theta,\lambda)\|^2\leq \kappa_2 \left(1+\|\theta\|^2\right)\label{eq:lem-lip-2}
         \end{align}
         for some constant $\kappa_2<\infty$.
         \item \label{lem-lip-3} $\nabla_\lambda L(\theta,\lambda)$ is Lipschitz continuous in $\lambda$.
     \end{enumerate}
     \end{lemma}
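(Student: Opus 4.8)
The plan is to reduce all three claims to two elementary facts on the compact set $\Theta$: that the per-step score $\nabla_\theta \log\pi_S(a|x;\theta) = \nabla_\theta\pi_S(a|x;\theta)/\pi_S(a|x;\theta)$ is bounded and Lipschitz, and that every other factor in the gradient expressions is bounded. First I would record the boundedness facts implied by the standing hypotheses: since $C$ takes values in $[0,C_{max}]$ and $H\le H_{max}$ almost surely, $J(\tau)=\sum_{t=0}^{H-1}\gamma^t c_t\in[0,H_{max}C_{max}]$; the multiplier lives in $\Lambda=[0,\lambda_{max}]$; and each trajectory has at most $H_{max}$ steps. The single nontrivial ingredient is a uniform lower bound $\pi_S(a_t|x_t;\theta)\ge\pi_{\min}>0$ over $\theta\in\Theta$ for the state--action pairs visited along a sampled trajectory: since $\pi_S$ is continuous in $\theta$ by Assumption \ref{assumption:differentiable} and $\Theta$ is compact, the minimum over $\Theta$ is attained, and it is strictly positive because such pairs carry positive student probability (hence positive teacher probability, by Assumption \ref{assumption:feas-optr}, so that $\mathbb{P}_\phi(\tau)>0$ as well). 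This bound is exactly what converts smoothness of $\pi_S$ into smoothness of $\log\pi_S$.

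For part (i) I would start from \eqref{eq:log_prob_p_pi}, $\nabla_\theta\log\mathbb{P}_\theta(\tau)=\sum_{t=0}^{H-1}\nabla_\theta\pi_S(a_t|x_t;\theta)/\pi_S(a_t|x_t;\theta)$, and show each summand is Lipschitz. Writing a summand as $u(\theta)/v(\theta)$ with $u=\nabla_\theta\pi_S$ and $v=\pi_S$, the numerator $u$ is $\mathscr{L}$-Lipschitz by Assumption \ref{assumption:differentiable} and bounded on $\Theta$ (a continuous map on a compact set), while $v$ is Lipschitz and satisfies $\pi_{\min}\le v\le 1$; the quotient estimate $\|u_1/v_1-u_2/v_2\|\le\|u_1-u_2\|/v_1+\|u_2\|\,|v_1-v_2|/(v_1 v_2)$ then makes each summand Lipschitz with a constant depending only on $\mathscr{L}$, $\pi_{\min}$, and the uniform bounds. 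A sum of at most $H_{max}$ such terms is Lipschitz, proving the first assertion. For the growth bound I would invoke the standard fact that an $L$-Lipschitz map $g$ obeys $\|g(\theta)\|\le\|g(\theta_0)\|+L\|\theta-\theta_0\|$ for a fixed reference $\theta_0$, whence $\|g(\theta)\|^2\le\kappa_1(\tau)(1+\|\theta\|^2)$ with $\kappa_1(\tau)=\max\{2(\|g(\theta_0)\|+L\|\theta_0\|)^2,\,2L^2\}$; the dependence on $\tau$ enters only through the visited pairs and the length $H$, and all of these constants stay uniformly bounded over $\tau$ because $H\le H_{max}$.

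For part (ii) I would work from the expectation form \eqref{eq:nabla-lag-theta} and show its integrand is Lipschitz in $\theta$ with a constant uniform in $\tau$. Its first factor $\nabla_\theta\log\mathbb{P}_\theta(\tau)$ is bounded and Lipschitz by part (i), while the bracketed factor $J(\tau)+\lambda\log(\mathbb{P}_\theta(\tau)/\mathbb{P}_\phi(\tau))+\lambda$ is a bounded affine combination of $J(\tau)$ (constant in $\theta$, bounded), $\lambda$ (bounded), and $\log(\mathbb{P}_\theta(\tau)/\mathbb{P}_\phi(\tau))=\sum_t\log\pi_S(a_t|x_t;\theta)-\log\mathbb{P}_\phi(\tau)$, which is Lipschitz (each $\log\pi_S$ has gradient $\nabla\pi_S/\pi_S$ bounded on $\Theta$) and bounded (since $\pi_{\min}\le\pi_S\le1$). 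The product of two bounded Lipschitz maps is Lipschitz via $\|A_1B_1-A_2B_2\|\le\|A_1\|\,|B_1-B_2|+|A_1-A_2|\,|B_2|$, and since every constant is controlled by $H_{max},C_{max},\gamma,\lambda_{max},\pi_{\min},\mathscr{L}$ and hence independent of $\tau$, the Lipschitz constant passes through the expectation; the growth bound $\|\nabla_\theta L\|^2\le\kappa_2(1+\|\theta\|^2)$ follows as in part (i), now with a $\tau$-independent $\kappa_2$. Part (iii) is then immediate from \eqref{eq:nabla-lag-lambda}: $\nabla_\lambda L(\theta,\lambda)=D_{KL}(\theta\,\|\,\phi)-\delta$ does not depend on $\lambda$, so it is constant in $\lambda$ and therefore trivially Lipschitz in $\lambda$ (with constant $0$).

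The main obstacle is the uniform positive lower bound $\pi_{\min}$ on $\pi_S$ over $\Theta$. Assumption \ref{assumption:differentiable} provides smoothness of $\pi_S$ but not of $\log\pi_S$, and without $\pi_S$ bounded away from zero the score $\nabla_\theta\pi_S/\pi_S$ is neither bounded nor Lipschitz; securing $\pi_{\min}>0$ rests on compactness of $\Theta$ together with positivity of $\pi_S$ on the trajectory's support. Everything afterward is routine bookkeeping with quotients, products, finite sums, and bounded-integrand expectations, all kept uniform in $\tau$ by the almost-sure bound $H\le H_{max}$.
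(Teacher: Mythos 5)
Your proof is correct and follows essentially the same route as the paper's: decompose each gradient into sums and products of bounded Lipschitz factors on the compact set $\Theta$, obtain the quadratic growth bounds from the linear-growth property of Lipschitz maps, and observe that $\nabla_\lambda L(\theta,\lambda)$ does not depend on $\lambda$. The only place you go beyond the paper is in explicitly arguing for the uniform lower bound $\pi_{\min}>0$ on the visited action probabilities --- the paper simply conditions on $\pi_S(a_t|x_t;\theta)>\psi$ for some $\psi>0$ --- and while your compactness instinct is right, note that positivity of $\pi_S$ at the sampling parameter does not by itself give positivity uniformly over all of $\Theta$, so you are ultimately relying on the same implicit assumption the paper makes.
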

     
 \begin{proof} Recall from \eqref{eq:log_prob_p_pi} that $\nabla_\theta \log\mathbb{P}_\theta (\tau) =\sum_{t=0}^{H-1} {\nabla_\theta \pi_S(a_t|x_t;\theta)}/{\pi_S(a_t|x_t;\theta)}$ whenever  $\pi_S(a_t|x_t;\theta)>\psi$ for all $t$ and for some $\psi>0$. Assumption \ref{assumption:differentiable} indicates that $\nabla_\theta \pi_S(a_t|x_t;\theta)$ is $\mathscr{L}$-Lispchitz continuous in $\theta$. Then using the fact the sum of the product of (bounded) Lipschitz functions is Lipschitz itself, one can conclude the Lipschitz continuity of $\nabla_\theta \log\mathbb{P}_\theta (\tau)$, and we denote by $L_1$ its finite Lipschitz constant. Also, noting that $H<\infty$ w.p.\ 1, then $\nabla_\theta \log\mathbb{P}_\theta (\tau)<\infty$ w.p.\ 1.  The Lipschitz continuity implies that for any fixed $\theta_0\in\Theta$,
     \begin{align}
         \|\nabla_\theta \log\mathbb{P}_\theta (\tau)\| \leq \|\nabla_\theta \log\mathbb{P}_\theta (\tau) |_{\theta=\theta_0}\| + L_1 \|\theta-\theta_0\|\leq K_1(\tau)(1+\|\theta\|). \label{eq:lemma:lips:1}
     \end{align}
     The first inequality follows from the linear growth condition of Lipschitz functions and the last one holds for a suitable value of $K_1(\tau)\coloneqq\max\{L_1,\|\nabla_\theta \log\mathbb{P}_\theta (\tau) |_{\theta=\theta_0}\| + L_1 \|\theta_0\|\}<\infty$. Taking the square of both sides of \eqref{eq:lemma:lips:1} yields \eqref{eq:lem-lip-1} with $\kappa_1(\tau) \coloneqq 2 (K_1(\tau))^2<\infty$.
     
     Since $\mathbb{P}_\theta (\tau)$ and $\log \mathbb{P}_\theta (\tau)$ are continuously differentiable in $\theta$ whenever $\mathbb{P}_\theta (\tau)>0$, 
     the Lipschitz continuity of $\nabla_\theta L(\theta,\lambda)$ can be investigated, from its definition \eqref{eq:nabla-lag-theta}, as the sums of products of (bounded) Lipschitz functions. From the definition \eqref{eq:nabla-lag-theta}, and recalling Assumption \ref{assumption:feas-optr} and the compactness of $\Theta$, one can verify the validity of \eqref{eq:lem-lip-2} with 
     \begin{align}
         \kappa_2 = \mathbb{E}_\tau\left[\kappa_1(\tau) \left(\frac{C_{max}}{1-\gamma} + \lambda_{max}\max_{\theta\in\Theta}\log\frac{\mathbb{P}_\theta (\tau)}{\mathbb{P}_\phi (\tau)}\right) \right]<\infty.
     \end{align}
     Finally, \textit{\ref{lem-lip-3}} immediately follows from the fact that $\nabla_\lambda L(\theta,\lambda)$ is a constant function of $\lambda$.
     \end{proof}

     \subsection{Convergence of PDPG Algorithm}\label{sec:app-conv}
     We use the standard procedure for proving the convergence of the PDPG algorithm. The proof steps are common for stochastic approximation methods and we refer the reader to \cite{chow2017risk,bhat2009natural} and references therein for more details. We summarize the scheme of the proof in the following steps:
     \begin{enumerate}
         \item \textbf{Tracking o.d.e.}: Under Assumption \ref{assump:step}, one can view the PDPG as a two-time-scale stochastic approximation method. Then, using the results of Section 6 of \cite{borkar2009stochastic}, we show that the sequence of $(\theta^k,\lambda^k)$ converges almost surely to a stationary point $(\theta^*,\lambda^*)$ of the corresponding continuous-time dynamical system.
         \item \textbf{Lyapunov Stability}: By using Lyapunov analysis, we show that the continuous-time system is locally asymptotically stable at a first-order stationary point. 
         \item \textbf{Saddle Point Analysis}: Since we have used the Lagrangian as the Lyapunov function, it implies the system is stable in the stationary point of the Lagrangian, which is, in fact, a local saddle point. Finally, we show that with an appropriate initial policy, the policy converges to a local optimal solution $\theta^*$ for the \ref{opt-r}.
     \end{enumerate}
     
     First, let us denote by $\Psi_\Xi\left[f(\xi)\right]$ the right directional derivative of $\Gamma_{\Xi}(\xi)$ in the direction of $f(\xi)$, defined as
	\begin{align}
	    \Psi_\Xi\left[f(\xi)\right] \coloneqq \lim_{\alpha\downarrow 0}\frac{\Gamma_\Xi\left[\xi + \alpha f(\xi)\right] -\Gamma_\Xi\left[\xi\right] }{\alpha}\nonumber
	\end{align}
	for any compact set $\Xi$ and $\xi\in \Xi$.

     Since $\theta$ converges on a faster time-scale than $\lambda$ by Assumption \ref{assump:step}, one can write the $\theta$-update rule \eqref{alg:eq:update-theta} with a relation that is invariant to $\lambda$:
     \begin{equation*}
			\theta^{k+1} = \Gamma_{\Theta}\left[\theta^{k} - \alpha_1(k)\left(\left. \frac{1}{N}\sum_{j=1}^{N} \nabla_{\theta}\log{\mathbb{P}_\theta(\tau^k_j)}\right|_{\theta=\theta^k}\left(J(\tau^k_j) + \lambda \log\frac{\mathbb{P}_\theta(\tau^k_j)}{\mathbb{P}_\phi(\tau^k_j)}+\lambda \right)\right)\right]\nonumber.
	\end{equation*}

    Consider the continuous-time dynamics of $\theta\in\Theta$ defined as
	\begin{align}
	    \dot{\theta} = \Psi_\Theta\left[-\nabla_\theta L(\theta,\lambda)\right],\label{eq:update-theta-cont}
	\end{align}
	where by using the right directional derivative $\Psi_\Theta\left[-\nabla_\theta L(\theta,\lambda)\right]$ in the gradient descent algorithm for $\theta$, the gradient will point in the descent direction of $L(\theta,\lambda)$ along the boundary of $\Theta$ (denoted by $\partial \Theta$) whenever the $\theta$-update hits the boundary. We refer the interested reader to Section 5.4 of \cite{borkar2009stochastic} for discussions about the existence of the limit in \eqref{eq:update-theta-cont}.
	
	Since $\lambda$ converges in the slowest time-scale, the $\lambda$-update rule \eqref{alg:eq:update-lambda} can be re-written for a converged value $\theta^*(\lambda)$ as
	\begin{align}
			\lambda^{k+1} = \Gamma_{\Lambda}\left[\lambda^{k} + \alpha_2(k) \left(\frac{1}{N}\sum_{j=1}^{N} \log\frac{\mathbb{P}_{\theta^*(\lambda)}(\tau^k_j)}{\mathbb{P}_\phi(\tau^k_j)}  -\delta \right)\right].\nonumber
	\end{align}
    Consider the continuous-time dynamics corresponding to $\lambda$, i.e.
    \begin{align}
        \dot{\lambda} = \Psi_\Lambda\left[\nabla_\lambda L(\theta,\lambda)\right],\label{eq:update-lambda-cont}
    \end{align}
    where by using $\Psi_\Lambda\left[\nabla_\lambda L(\theta,\lambda)\right]$ in the gradient ascent algorithm, the gradient will point in the ascent direction along the boundary of $\Lambda$ (denoted by $\partial \Lambda$) whenever the $\lambda$-update hits the boundary.
    
    We prove Theorem \ref{thm:main} next.
    \begin{proof}
    \textbf{Convergence of the $\theta$-update}:
    First, we need to show that the assumptions of Lemma 1 in Chapter 6 of \cite{borkar2009stochastic} hold for the  $\theta$-update and an arbitrary value of $\lambda$. 
    Let us justify these assumptions: (\textit{i}) the Lipschitz continuity follows from Lemma \ref{lemma:lips}, and (\textit{ii}) the step-size rules follow from Assumption \ref{assump:step}. \text{(\textit{iii})} For an arbitrary value $\lambda$, one can write the $\theta$-update as a stochastic approximation, i.e.,
	\begin{align}
		\theta^{k+1} = \Gamma_{\Theta}\left[\theta^{k} + \alpha_1(k)\left( -\nabla_\theta L(\theta,\lambda)|_{\theta = \theta^k}+ M_{\theta_{k+1}}\right)\right],\label{eq:proof-theta-update}
	\end{align}
	where
	\begin{align}
	M_{\theta^{k+1}} = \nabla_\theta L(\theta,\lambda)|_{\theta = \theta^k} - \frac{1}{N}\sum_{j=1}^{N} \left.\nabla_{\theta}\log{\mathbb{P}_\theta(\tau^k_j)}\right|_{\theta=\theta^k}\left(J(\tau^k_j) + \lambda^k \log\frac{\mathbb{P}_\theta(\tau^k_j)}{\mathbb{P}_\phi(\tau^k_j)}+\lambda^k \right).
	\end{align}
	For $M_{\theta^{k+1}}$ to be a Martingale difference error term, we need to show that its expectation with respect to the filtration $\mathcal{F}_\theta^k = \sigma(\theta^{m},M_{\theta^{m}}, m\leq k)$ is zero and that it is square integrable with $\mathbb{E}\left[\|M_{\theta^{k+1}}\|^2|\mathcal{F}_\theta^k\right]\leq \kappa^k(1+\|\theta^k\|^2)$ for some $\kappa^k$. Since the trajectories $\mathcal{T}^k$ are generated from the probability mass function $\mathbb{P}_{\theta^k}(\cdot)$, it immediately follows that $\mathbb{E}\left[M_{\theta^{k+1}}|\mathcal{F}_\theta^k\right] = 0$. Also, we have:
	\begin{align}
	    &\|M_{\theta^{k+1}}\|^2\nonumber\\
	    &\leq 2\| \nabla_\theta L(\theta,\lambda)|_{\theta 
	    = \theta^k}\|^2 + \frac{2}{N^2}\left(\frac{C_{max}}{1-\gamma}+ \lambda_{max}\left( D_{max}^k+1\right)\right)^2\left\|\sum_{j=1}^{N} \nabla_{\theta}\log{\mathbb{P}_\theta(\tau^k_j)}\bigr|_{\theta=\theta^k}\right\|^2\nonumber\\
	    &\leq 2\kappa_2^k \left(1+\|\theta^k\|^2\right)+ \frac{2^N}{N^2} \left(\frac{C_{max}}{1-\gamma}+ \lambda_{max}\left( D_{max}^k+1\right)\right)^2\left(\sum_{j=1}^{N} \kappa_1^k(\tau^k_j) \left(1+\|\theta^k\|^2\right)\right)\nonumber\\
	    &\leq \kappa^k \left(1+\|\theta^k\|^2\right),\nonumber
	\end{align}
	where
	\begin{align}
	    D_{max}^k &=\max_{1\leq j\leq N}\log\frac{\mathbb{P}_\theta(\tau^k_j)}{\mathbb{P}_\phi(\tau^k_j)},\quad \text{and} \nonumber\\
	    \kappa^k &= 2\kappa_2^k+ \frac{2^N}{N} \max_{1\leq j\leq N}\kappa_1^k(\tau^k_j) \left(\frac{C_{max}}{1-\gamma}+ \lambda_{max}\left( D_{max}^k+1\right)\right)^2< \infty.\nonumber
	\end{align}
	 The first and second inequality uses the relation $\|\sum_{i=1}^N a_i\|^2\leq 2^{N-1}(\sum_{i=1}^N\|a\|^2 )$. Also, the second one uses the results of Lemma \ref{lemma:lips}. Finally, the boundedness of $\kappa^k$ follows from Assumption \ref{assumption:feas-optr} and having $\kappa^k_1<\infty$, $\kappa^k_2(\tau^k_j)<\infty$ w.p. 1. Finally, \text{(\textit{iv})} $\sup_k \|\theta^k\|<\infty$ almost surely, because all $\theta^k$ are within the compact set $\Theta$. Hence, by Theorem 2 of Chapter 2 in \cite{borkar2009stochastic}, the sequence $\{\theta^k\}$ converges almost surely to a (possibly sample path dependent) internally chain transitive invariant set of o.d.e. \eqref{eq:update-theta-cont}.

 	For a given $\lambda$, define the Lyapunov function
	\begin{align}
	    \mathcal{L}_\lambda(\theta) = L(\theta,\lambda) - L(\theta^*,\lambda),
	\end{align}
	where $\theta^*\in\Theta$ is a local minimum point. For the sake of simplifying the proof, let us consider that $\theta^*$ is an isolated local minimum point, i.e., there exists $r$ such that for all $\theta\in\mathbb{B}_{r}(\theta^*)$, $\mathcal{L}_\lambda(\theta)>\mathcal{L}_\lambda(\theta^*) $. This means that the Lyapunov function $\mathcal{L}_\lambda(\theta)$ is locally positive definite, i.e., $\mathcal{L}_\lambda(\theta^*)=0$ and $\mathcal{L}_\lambda(\theta)>0$ for $\mathbb{B}_{r}\setminus\{\theta^*\}$. 
	
	If we establish the negative semi-definiteness of $d \mathcal{L}_\lambda(\theta)/{dt}\leq 0$, then we can use the Lyapunov stability theorems to show the convergence of the dynamical system. Consider the time derivative of corresponding continuous-time system for $\theta$, i.e.,
	 \begin{align}
	     \frac{d \mathcal{L}_\lambda(\theta)}{dt} = \frac{d L(\theta,\lambda)}{dt} = (\nabla_\theta L(\theta,\lambda))^T \Psi_\Theta(-\nabla_\theta L(\theta,\lambda)).
	 \end{align}
	Consider two cases:
	\begin{enumerate}[label=\textit{\roman*})]
	    \item For a fixed $\theta_0\in\Theta$, there exists $\alpha_0>0$ such that the update $\theta_0-\alpha\nabla_\theta L(\theta,\lambda)|_{\theta=\theta_0}\in\Theta$ for all $\alpha\in (0,\alpha_0]$. In this case, $\Psi_\Theta(-\nabla_\theta L(\theta,\lambda)) = -\nabla_\theta L(\theta,\lambda)$, which further implies that 
	    \begin{align}
	        \frac{d L(\theta_0,\lambda)}{dt} = -\|\nabla_\theta L(\theta,\lambda)|_{\theta=\theta_0}\|^2 \leq 0,\nonumber
	    \end{align}
	    and this quantity is non-zero as long as $\|\Psi_\Theta(-\nabla_\theta L(\theta,\lambda))\|\neq 0$.
	\item For fixed $\theta_0\in\Theta$ and any $\alpha_0>0$, there exists $\alpha\in(0,\alpha_0]$ such that $\theta_\alpha\coloneqq\theta_0-\alpha \nabla_\theta L(\theta,\lambda)|_{\theta=\theta_0}\not\in\Theta$. The projection $\Gamma_\Theta(\theta_\alpha) = \argmin_{\theta\in\Theta} \frac12\|\theta-\theta_\alpha\|^2$ maps $\theta_\alpha$ to a point in $\partial \Theta$. This projection is single-valued because of the compactness and convexity of $\Theta$, and we denote the projected point by $\bar{\theta}_\alpha\in\Theta$. Consider $\alpha\downarrow 0$, then
	\begin{align}
	    (\nabla_\theta L(\theta,\lambda))^T \Psi_\Theta(-\nabla_\theta L(\theta,\lambda)) &=  \lim_{\alpha\downarrow 0} \frac{ (\theta - \theta_\alpha)^T(\bar{\theta}_\alpha - \theta)}{\eta} \nonumber\\
	    &= \lim_{\alpha\downarrow 0} \frac{ - \|\bar{\theta}_\alpha-\theta|^2}{\eta^2} + \frac{ (\bar{\theta}_\alpha - \theta_\alpha)^T(\bar{\theta}_\alpha - \theta)}{\eta^2}\leq  0,\nonumber
	\end{align}
	where the last inequality follows from the Projection Theorem (see Proposition 1.1.9 of \cite{bertsekas2009convex}). Again, one can verify that the time-derivative quantity is non-zero as long as $\|\Psi_\Theta(-\nabla_\theta L(\theta,\lambda))\|\neq 0$.
	\end{enumerate}
	
	In summary, $d \mathcal{L}_\lambda(\theta)/{dt}\leq 0$ and this quantity is nonzero as long as $\|\Psi_\Theta(-\nabla_\theta L(\theta,\lambda))\|\neq 0$. Then by \textit{LaSalle's Local Invariant Set Theorem} (see, e.g., Theorem 3.4 of \cite{slotine1991applied}), we conclude that the dynamical system tends to the largest positive invariant set within $\mathbb{M}_\theta\coloneqq\{\theta: \|\Psi_\Theta(-\nabla_\theta L(\theta,\lambda))\| = 0\}$. Notice that $\theta^*\in\mathbb{M}_\theta$. Let $l>0$ be equal to $ \min\{\mathcal{L}_\lambda(\theta): \|\Psi_\Theta(-\nabla_\theta L(\theta,\lambda))\| = 0, \theta\in \mathbb{B}_{r}(\theta^*)\setminus {\theta^*}\}$. Then every trajectory starting from the attraction region $\{\theta\in\mathbb{B}_{r}(\theta^*)| \mathcal{L}_\lambda(\theta)<l\}$ will tend to the local minimum $\theta^*$. Since we chose $\theta^*$ to be arbitrary, this holds for all local minima. Hence, using Corollary $4$ of Chapter 2 in \cite{borkar2009stochastic}, we conclude that if the initial policy $\theta^0$ is within the attraction region of a local minimum point $\theta^*$, then it will converge to it almost surely. 
	
	\begin{remark}
		The case in which $\theta^*$ is not isolated can be handled similarly, with the minor difference that the convergence happens to a set of optimal points instead of to a single point.
	\end{remark}

	\textbf{Convergence of the $\lambda$-update}: We need to show that the assumptions of Theorem 2 in Chapter 6 of \cite{borkar2009stochastic} hold for the two-time-scale stochastic approximation theory. Let us verify the validity of these assumptions: (\textit{i}) $\nabla_\lambda L(\theta,\lambda)$ is a Lipschitz function in $\lambda$ from Lemma \ref{lemma:lips}, and (\textit{ii}) step-size rules follow from Assumption \ref{assump:step}. (\textit{iii})	Since $\lambda$ converges in a slower time-scale, we have $\|\theta^{k,i} - \theta^*(\lambda^k)\|\to 0$ almost surely as $i\to \infty$, which, according to the Lipschitz continuity of $\nabla_\lambda L(\theta,\lambda)$, implies that
	\begin{align}
	    \|\nabla_\lambda L(\theta,\lambda)|_{\theta=\theta^{k,i},\lambda=\lambda^k} - \nabla_\lambda L(\theta,\lambda)|_{\theta=\theta^*(\lambda^k),\lambda=\lambda^k}\|\to 0 \quad\text{ as } i\to \infty.
	\end{align}
	Hence the $\lambda$-update can be written as 
	\begin{align}
	\lambda^{k+1} = \Gamma_{\Lambda}\Bigl[\lambda^{k} + \alpha_2(k) \left( \nabla_\lambda L(\theta,\lambda)|_{\theta=\theta^*(\lambda^k),\lambda=\lambda^k} + M_{\lambda^{k+1}}\right)\Bigr]\nonumber,
	\end{align}
	where
	\begin{align}
	    M_{\lambda^{k+1}} = -\nabla_\lambda L(\theta,\lambda)|_{\theta=\theta^*(\lambda^k),\lambda=\lambda^k} +\Bigl(\frac{1}{N}\sum_{j=1}^{N} \log\frac{\mathbb{P}_{\theta^*(\lambda^k)}(\tau^k_j)}{\mathbb{P}_\phi(\tau^k_j)}  -\delta \Bigr)\label{eq:mart-lambda}.
	\end{align}
	From \eqref{eq:mart-lambda}, we can verify that $\mathbb{E} \left[M_{\lambda^{k+1}}|\mathcal{F}_\lambda^k\right]= 0$, where $\mathcal{F}_\lambda^k = \sigma(\lambda^m,M_{\lambda^{m}}, m\leq k)$ is a filtration of $\lambda$ generated by different independent trajectories. Also, we have:
	\begin{align}
	    \|M_{\lambda^{k+1}}\|^2
	    \leq 2\| \nabla_\lambda L(\theta,\lambda)|_{\lambda 
	    = \lambda^k}\|^2 + \frac{2^N}{N}\left(\max_{1\leq j\leq N}\left|\log\frac{\mathbb{P}_{\theta^*(\lambda^k)}(\tau^k_j)}{\mathbb{P}_\theta(\tau^k_j)}-\delta\right|\right)^2<\infty.\nonumber
	\end{align}
	Hence, $M_{\lambda^{k+1}}$ is a Martingale difference error. Also, (\textit{v}) $\sup\{\lambda^k\}<\infty$. Recall that from the convergence analysis of the $\theta$-update for a $\lambda^k$, we know that $\theta^*(\lambda^k)$ is an asymptotically stable point. Then by Theorem 2 of Chapter 6 in \cite{borkar2009stochastic}, we can conclude that $(\theta^k,\lambda^k)$ converges almost surely to $(\theta^*(\lambda^*),\lambda^*)$, where $\lambda^*$ belongs to an internally chain transitive invariant set of \eqref{eq:update-lambda-cont}. 
	
	Define the Lyapunov function:
	\begin{align}
	    \mathcal{L}(\lambda) = - L(\theta^*(\lambda),\lambda) + L(\theta^*(\lambda^*),\lambda^*),\nonumber
	\end{align}
	where $\lambda^*$ is a local maximum point, i.e., there exists $r$ such that for any $\lambda\in \mathcal{B}_r(\lambda^*)$, the Lyapunov function $\mathcal{L}(\lambda)$ is positive definite. We can follow similar lines of arguments as we did for the $\theta$-update to show that $\frac{d\mathcal{L}(\lambda)}{dt}\leq 0$ and this quantity is non-zero as long as $\Psi_\Lambda(-\nabla_\lambda L(\theta^*(\lambda),\lambda))\neq 0$. Then by using the results of LaSalle's Local Invariant Set Theorem, we can establish the convergence of the dynamical system to the largest invariant set within $\mathbb{M}_\lambda\coloneqq \{\lambda: \Psi_\Lambda(-\nabla_\lambda L(\theta^*(\lambda),\lambda)) = 0\}$. This means that $\lambda^*\in \mathbb{M}_\lambda$ is a stationary point. Let $l = \min\{\mathcal{L}(\lambda):\Psi_\Lambda(-\nabla_\lambda L(\theta^*(\lambda),\lambda))=0 , \lambda\in \mathcal{B}_r(\lambda^*)\setminus \lambda^* \}$. Then, every trajectory starting with $\lambda^0$ in $\{\lambda\in\mathcal{B}_r(\lambda^*): \mathcal{L}(\lambda) < l \}$ will tend to $\lambda^*$ w.p.1. 
	
	\textbf{Saddle Point Analysis}: By denoting $\theta^*= \theta^*(\lambda^*)$, we want to show that $(\theta^*,\lambda^*)$ is, in fact, a saddle point of the Lagrangian $L(\theta,\lambda)$. Recall that, as we proved in the convergence the of $\theta$-update, $\theta^*$ is a local minimum of $L(\theta,\lambda)$ within a sufficiently small ball around itself, i.e., there exists $r>0$ such that
	\begin{align}
	    L(\theta^*,\lambda^*) \leq L(\theta,\lambda^*), \quad \forall \theta\in \Theta\cap \mathcal{B}_r(\theta^*).\label{eq:sad1}
	\end{align}
	It is easy to verify that $\theta^*$ is a feasible solution of \eqref{opt-r} whenever $\lambda^*\in [0,\lambda_{max})$, i.e.
	\begin{align}
	    D_{KL}(\theta^*\;\|\;\phi)\leq \delta.\label{eq:feasibility}
	\end{align}
	To show this, assume for a contradiction that $D_{KL}(\theta^*\;\|\;\phi) - \delta > 0$. Then,
	\begin{align}
	    \Psi_\Lambda\left[\nabla_\lambda L(\theta,\lambda)|_{\theta=\theta^*,\lambda=\lambda^*}\right] &
	   = \lim_{\alpha\downarrow 0}\frac{\Gamma_\Lambda\left[\lambda^* + \alpha \nabla_\lambda L(\theta,\lambda)|_{\theta=\theta^*,\lambda=\lambda^*}\right] -\Gamma_\Lambda\left[\lambda^*\right] }{\alpha}\nonumber\\
   	   &= \lim_{\alpha\downarrow 0}\frac{\Gamma_\Lambda\left[\lambda^* + \alpha \left(D_{KL}(\theta^*\;\|\;\phi) - \delta\right)\right] -\Gamma_\Lambda\left[\lambda^*\right] }{\alpha}\nonumber\\
	    &=  D_{KL}(\theta^*\;\|\;\phi) - \delta > 0,\nonumber
	\end{align}
	which contradicts the fact that   $\Psi_\Lambda\left[\nabla_\lambda L(\theta,\lambda)|_{\theta=\theta^*,\lambda=\lambda^*}\right]=0$. Notice that the feasibility cannot be verified when $\lambda^*=\lambda_{max}$, because $\Psi_\Lambda\left[\nabla_\lambda L(\theta,\lambda)|_{\theta=\theta^*(\lambda_{max}),\lambda=\lambda_{max}}\right]=0$ when $D_{KL}(\theta^*\;\|\;\phi)> \delta$. In this case, we increase $\lambda_{max}$ (e.g., we set $\lambda_{max}\leftarrow2\lambda_{max}$ in our algorithm) if such a behavior happens until it converges to an interior point of $[0,\lambda_{max}]$.  
	
	In addition, the complementary slackness condition 
	\begin{align}
	    \lambda^* (D_{KL}(\theta^*\;\|\;\phi)- \delta)=0 \label{eq:complementary}
	\end{align}
	holds. To show this, we only need to verify that $D_{KL}(\theta^*\;\|\;\phi)<\delta$ yields  $\lambda^*=0$. For a contradiction, suppose that $\lambda^*\in (0,\lambda_{max})$. Then, we have
	\begin{align}
	 \Psi_\Lambda\left[\nabla_\lambda L(\theta,\lambda)|_{\theta=\theta^*(\lambda^*),\lambda=\lambda^*}\right] &=
	   D_{KL}(\theta^*\;\|\;\phi) - \delta < 0,\nonumber
	\end{align}
	which contradicts the fact that $\Psi_\Lambda\left[\nabla_\lambda L(\theta,\lambda)|_{\theta=\theta^*,\lambda=\lambda^*}\right]=0$, meaning that $\lambda^*=0$ in this case. Hence, we have:
	\begin{align}
	    L(\theta^*,\lambda^*) &= V_{\theta^*}(x_0) + \lambda^* \left(D_{KL}(\theta^*\;\|\;\phi) - \delta\right)\nonumber\\
	    &= V_{\theta^*}(x_0) \nonumber\\
	    &\geq  V_{\theta^*}(x_0) + \lambda \left(D_{KL}(\theta^*\;\|\;\phi) - \delta\right) = L(\theta^*,\lambda).\label{eq:sad2}
	\end{align}
	
	From \eqref{eq:sad1} and \eqref{eq:sad2}, we observe that $(\theta^*,\lambda^*)$ is a saddle point of $L(\theta,\lambda)$, so according to the saddle point theorem, $\theta^*$ is a local minimum of \eqref{opt-r}. Recall that the result of Theorem \ref{thm:main} depends on the initial values for $\theta^0$ and $\lambda^0$, so the convergence to a local minimum is sample path depenedant.
	\end{proof}
	
    \subsection{Proof of Corollary \ref{cor-stationary}}\label{sec:app-coro-station}
    \begin{proof}
    
     From the convergence analysis of the $\theta$-update, we know that $\{\theta^k\}$ converges almost surely to the largest invariant set within $\mathbb{M}_\theta$, and similarly, $\{\lambda^k\}$ converges almost surely to the largest invariant set within $\mathbb{M}_\lambda$. We also know from \eqref{eq:feasibility} that $\theta^*$ is a feasible point of \eqref{opt-r}. When $\lambda^*=0$, then $ L(\theta^*,\lambda^*) =  V_{\theta^*}(x_0)$.  Also, for $\lambda^*>0$, the complementary slackness condition \eqref{eq:complementary} implies $D_{KL}(\theta^*\;\|\;\phi) = \delta$. Hence $\nabla_\theta D_{KL}(\theta\;\|\;\phi)|_{\theta=\theta^*} = 0$, which in turn, means that
    \begin{align}
        \nabla_\theta L(\theta,\lambda^*)|_{\theta=\theta^*} = \nabla_\theta V_\theta(x_0)|_{\theta=\theta^*} + \lambda^* \nabla_\theta D_{KL}(\theta\;\|\;\phi)|_{\theta=\theta^*} =  \nabla_\theta V_\theta(x_0)|_{\theta=\theta^*}.
    \end{align}
     Hence, for a $\theta^*$ located in the interior of $\Theta$, we have $\nabla_\theta L(\theta,\lambda^*)|_{\theta=\theta^*} = \nabla_\theta V_\theta(x_0)|_{\theta=\theta^*} = 0$, so it is a first-order stationary point of \eqref{opt-r}. However, if $\theta^*\in\partial \Theta$, it is possible to have $\|\nabla_\theta L(\theta,\lambda^*)|_{\theta=\theta^*}\|\neq 0$. 
     \end{proof}
     \begin{remark}
     In practice, we choose the projection set $\Theta$ large enough so that the latter case (convergence to boundary) will not happen. For example, assuring that the weights of a neural network do not diverge is a sufficient criterion to use instead of the projection operator $\Gamma_\Theta$.
     \end{remark}

     \subsection{Equivalent Results for \eqref{opt-f}}\label{sec:equivalence}
 	A similar PDPG algorithm to the one proposed in Algorithm \ref{alg:PG} can solve \eqref{opt-f},  only requiring a slight modification of rules \eqref{alg:eq:update-theta} and \eqref{alg:eq:update-lambda} as
	 \begin{align}
	 \theta^{k+1} &= \Gamma_{\Theta}\Bigl[\theta^{k} - \alpha_1(k)\Bigl( \frac{1}{N}\sum_{j=1}^{N} \nabla_{\theta}\log{\mathbb{P}_\theta(\tau^k_j)}\bigr|_{\theta=\theta^k}\bigl(J(\tau^k_j) + \lambda^k \;IS(\tau^k_j)\log\frac{\mathbb{P}_\phi(\tau^k_j)}{\mathbb{P}_\theta(\tau^k_j)}-\lambda^k \bigr)\Bigr)\Bigr]\nonumber\\
	 \lambda^{k+1} &= \Gamma_{\Lambda}\Bigl[\lambda^{k} + \alpha_2(k) \Bigl(\frac{1}{N}\sum_{j=1}^{N} \;IS(\tau^k_j) \log\frac{\mathbb{P}_\phi(\tau^k_j)}{\mathbb{P}_\theta(\tau^k_j)}  -\delta \Bigr)\Bigr],\nonumber
	 \end{align}
	 where $IS(\tau^k_j) ={\mathbb{P}_\phi(\tau^k_j)}/{\mathbb{P}_\theta(\tau^k_j)}$ is the importance sampling weight added to account for the bias introduced by sampling under the student's policy.
 	To ensure a well-defined \eqref{opt-f}, we need the following assumption:
    \begin{assumption}\label{assumption:feas-optf}
 	\textbf{Well-defined \eqref{opt-f}}: for any state--action pair $(x,a)\in\mathcal{X}\times\mathcal{A}$ with $\pi_S(x,a)=0$, we have $\pi_T(x,a)= 0$.
	 \end{assumption}
     This assumption ensures a similar criterion to that of  Assumption \ref{assumption:feas-optr}, but notice that in this case, the student might take any action, regardless of the teacher's policy. Exactly the same steps can be taken, virtually verbatim, to prove the following convergence property of the PDPG algorithm for \eqref{opt-f}.
	\begin{theorem}\label{thm:main-f}
	Under Assumptions \ref{assumption:differentiable}, \ref{assump:step}, and \ref{assumption:feas-optf}, the sequence of policy updates (starting from $\theta^0$ sufficiently close to a local optimum point $\theta^*$) and Lagrange multipliers converges almost surely to a saddle point of the Lagrangian, i.e., $(\theta(k),\lambda(k)) \overset{a.s.}{\longrightarrow} (\theta^* , \lambda^*$). Then, $\theta^*$ is the local optimal solution of \eqref{opt-f}.
	\end{theorem}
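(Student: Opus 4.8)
The plan is to replay the two-time-scale stochastic approximation argument used for Theorem~\ref{thm:main}, adapting every step to the forward Lagrangian $L(\theta,\lambda)=V_\theta(x_0)+\lambda\bigl(D_{KL}(\phi\;\|\;\theta)-\delta\bigr)$ and to the importance-sampling estimators that appear in the modified updates for \eqref{opt-f}. The only structural differences from the reverse case are that (i) the divergence is now $D_{KL}(\phi\;\|\;\theta)=\sum_\tau \mathbb{P}_\phi(\tau)\log\tfrac{\mathbb{P}_\phi(\tau)}{\mathbb{P}_\theta(\tau)}$, whose score-function gradient, once rewritten as an expectation under the sampling law $\mathbb{P}_\theta$, carries the weight $IS(\tau)=\mathbb{P}_\phi(\tau)/\mathbb{P}_\theta(\tau)$, and (ii) Assumption~\ref{assumption:feas-optr} is replaced by Assumption~\ref{assumption:feas-optf}. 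I would first re-derive $\nabla_\theta L$ and $\nabla_\lambda L$ exactly along the lines of Appendix~\ref{sec:app-grad}, arriving at the IS-weighted sample estimators that define the stated $\theta$- and $\lambda$-updates.

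Next I would establish the forward analogue of Lemma~\ref{lemma:lips}, where the one genuinely new quantity to control is $IS(\tau)$. Under Assumption~\ref{assumption:feas-optf} the ratio is well defined, since $\pi_T(a|x;\phi)=0$ whenever $\pi_S(a|x;\theta)=0$, and because the transition and initial-state factors cancel we have $IS(\tau)=\prod_{t}\pi_T(a_t|x_t;\phi)/\pi_S(a_t|x_t;\theta)$. Using compactness of $\Theta$ to bound each student probability below by some $\psi>0$ (as in the proof of Lemma~\ref{lemma:lips}) together with $H\le H_{max}$ almost surely, this gives the uniform bound $IS(\tau)\le(1/\psi)^{H_{max}}<\infty$. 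With $IS(\tau)$ bounded, the products $IS(\tau)\log\tfrac{\mathbb{P}_\phi(\tau)}{\mathbb{P}_\theta(\tau)}$ and $IS(\tau)\nabla_\theta\log\mathbb{P}_\theta(\tau)$ are again finite sums of products of bounded Lipschitz functions, so $\nabla_\theta L$ and $\nabla_\lambda L$ inherit the same Lipschitz continuity and linear-growth bounds $\|\nabla_\theta L\|^2\le\kappa_2(1+\|\theta\|^2)$.

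Armed with these estimates, the three-part scheme of Appendix~\ref{sec:app-conv} transfers essentially verbatim. For the fast $\theta$-timescale I would write the update as $\theta^{k+1}=\Gamma_\Theta[\theta^k+\alpha_1(k)(-\nabla_\theta L+M_{\theta^{k+1}})]$ and verify the conditions of Borkar: Lipschitzness and the step-size rules follow from the above and Assumption~\ref{assump:step}; $\mathbb{E}[M_{\theta^{k+1}}\mid\mathcal{F}^k_\theta]=0$ because the trajectories are sampled from $\mathbb{P}_{\theta^k}$; the square-integrability bound $\mathbb{E}[\|M_{\theta^{k+1}}\|^2\mid\mathcal{F}^k_\theta]\le\kappa^k(1+\|\theta^k\|^2)$ holds once $IS(\tau)$ is bounded; and boundedness of the iterates is automatic from compactness of $\Theta$. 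The same Lyapunov function $\mathcal{L}_\lambda(\theta)=L(\theta,\lambda)-L(\theta^*,\lambda)$, the same two-case projected-gradient argument, and LaSalle's theorem then yield local asymptotic stability of $\theta^*(\lambda)$, and the slow $\lambda$-timescale analysis (Theorem~2, Chapter~6 of \cite{borkar2009stochastic}) gives $(\theta^k,\lambda^k)\to(\theta^*(\lambda^*),\lambda^*)$ almost surely.

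Finally, the saddle-point step requires only sign bookkeeping: since $\nabla_\lambda L=D_{KL}(\phi\;\|\;\theta)-\delta$, the feasibility argument ($D_{KL}(\phi\;\|\;\theta^*)\le\delta$ whenever $\lambda^*<\lambda_{max}$) and the complementary-slackness argument carry over unchanged, yielding $L(\theta^*,\lambda)\le L(\theta^*,\lambda^*)\le L(\theta,\lambda^*)$ and hence that $\theta^*$ is a local optimum of \eqref{opt-f}. I expect the single real obstacle to be precisely the one point where the two proofs differ, namely bounding the importance-sampling weight $IS(\tau)$ so that the martingale-difference noise stays square-integrable and the gradients stay Lipschitz; as argued above this is secured by Assumption~\ref{assumption:feas-optf} together with compactness of $\Theta$ and $H\le H_{max}$, after which the remainder is a transcription of the \eqref{opt-r} proof.
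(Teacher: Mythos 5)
Your proposal takes essentially the same route as the paper, which proves Theorem~\ref{thm:main-f} simply by asserting that the argument for \eqref{opt-r} carries over ``virtually verbatim'' once the update rules are replaced by their importance-sampling-weighted analogues and Assumption~\ref{assumption:feas-optr} is swapped for Assumption~\ref{assumption:feas-optf}. If anything, your write-up is more complete than the paper's, since you make explicit the one genuinely new ingredient---the uniform bound $IS(\tau)\le(1/\psi)^{H_{max}}$ guaranteeing that the martingale-difference noise remains square-integrable and the gradients remain Lipschitz---which the paper leaves implicit.
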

 	

	
	\section{Practical PDPG Algorithm}
	A naive implementation of Algorithm \ref{alg:PG} would result in a high-variance training procedure. In this section, we discuss several techniques for variance reduction, resulting in a more stable algorithm compared to the one proposed in Algorithm \ref{alg:PG}.

	\subsection{Step-wise KL-divergence Measure}\label{sec:measure}
	
	 In the policy distillation literature, some studies use a \textit{trajectory-wise} KL-divergence \eqref{eq:traj-wise-forward} as the distance metric \cite{teh2017distral}, but the \textit{step-wise} KL-divergence between the distribution is also common \cite{ghosh2017divide}, which is defined as:
	\begin{align}
	D^{step}_{KL}(\phi\;\|\;\theta)= \mathbb{E}_{x\sim d_{\pi_T}}\left[D_{KL}\bigl(\pi_T(\cdot|x;\phi)\;\|\;\pi_S(\cdot|x;\theta)\bigr)\right], \label{eq:kl-def}
\end{align}
where
\begin{align}
D_{KL}\bigl(\pi_T(\cdot|x;\phi)\;\|\;\pi_S(\cdot|x;\theta)\bigr)= \sum_{a\in\mathcal{A}}\pi_T(a|x;\phi) \log\frac{\pi_T(a|x;\phi)}{\pi_S(a|x;\theta)}.
\end{align}	 
	 In the next proposition, we explore the relations between these two methods.
	
	\begin{proposition}
		The following relation holds between the trajectory-wise and step-wise KL-divergence metrics:
		\begin{align}
		D_{KL}(\phi\;\|\;\theta) \leq \mathbb{E}[H] \;\;D^{step}_{KL}(\phi\;\|\;\theta)
		\end{align}
	\end{proposition}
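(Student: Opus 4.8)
The plan is to reduce the trajectory-wise divergence to a sum of per-state divergences via the chain rule for KL, and then convert that sum into the product $\mathbb{E}[H]\cdot D^{step}_{KL}$ using the state-occupancy interpretation of $d_{\pi_T}$.

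First I would write out the two trajectory measures explicitly. Under $\mathbb{P}_\phi$ the actions are drawn from the teacher, so $\mathbb{P}_\phi(\tau)=P_0(x_0)\prod_{t=0}^{H-1}\pi_T(a_t|x_t;\phi)P(x_{t+1}|x_t,a_t)$, whereas $\mathbb{P}_\theta(\tau)$ is identical except that $\pi_T$ is replaced by $\pi_S$. Forming the log-ratio inside \eqref{eq:traj-wise-forward}, the initial-state factor $P_0(x_0)$ and every transition factor $P(x_{t+1}|x_t,a_t)$ cancel --- exactly the cancellation already exploited in \eqref{eq:log_prob_p_pi} --- leaving only the policy terms, so that
\begin{align}
D_{KL}(\phi\;\|\;\theta)=\mathbb{E}_{\tau\sim\mathbb{P}_\phi}\left[\sum_{t=0}^{H-1}\log\frac{\pi_T(a_t|x_t;\phi)}{\pi_S(a_t|x_t;\theta)}\right].
\end{align}

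Next I would apply the tower property. Conditioning on the state $x_t$ visited at time $t$, the action $a_t$ is distributed according to $\pi_T(\cdot|x_t;\phi)$ under $\mathbb{P}_\phi$, so the inner expectation over $a_t$ collapses to the per-state divergence $g(x_t)\coloneqq D_{KL}\bigl(\pi_T(\cdot|x_t;\phi)\;\|\;\pi_S(\cdot|x_t;\theta)\bigr)\geq 0$. This yields the identity
\begin{align}
D_{KL}(\phi\;\|\;\theta)=\mathbb{E}_{\tau\sim\mathbb{P}_\phi}\left[\sum_{t=0}^{H-1} g(x_t)\right].
\end{align}
Because $H\le H_{max}$ almost surely and $g\ge 0$, Tonelli's theorem lets me interchange the (finite) sum and the expectation and regroup by state: writing $\rho(x)\coloneqq\mathbb{E}_{\mathbb{P}_\phi}\bigl[\sum_{t=0}^{H-1}\mathbbm{1}[x_t=x]\bigr]$ for the expected number of visits to $x$, one obtains $D_{KL}(\phi\;\|\;\theta)=\sum_x \rho(x)\,g(x)$ with $\sum_x\rho(x)=\mathbb{E}[H]$. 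Recognizing the normalized occupancy $d_{\pi_T}(x)=\rho(x)/\mathbb{E}[H]$ then gives $D_{KL}(\phi\;\|\;\theta)=\mathbb{E}[H]\sum_x d_{\pi_T}(x)\,g(x)=\mathbb{E}[H]\,D^{step}_{KL}(\phi\;\|\;\theta)$, which already implies the asserted bound.

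The step I expect to be the crux is the last one: matching the expected per-step sum to $\mathbb{E}[H]$ times the single-state expectation. This is clean precisely when $d_{\pi_T}$ is taken to be the normalized state-visitation (occupancy) measure, in which case the argument in fact delivers an \emph{equality} and the inequality is immediate; if instead $d_{\pi_T}$ denotes a limiting or stationary surrogate, one must dominate the time-indexed marginals by $d_{\pi_T}$, and here the nonnegativity of each $g(x_t)$ is what keeps the inequality pointing in the asserted direction. Either way, the nonnegativity of the step-wise KL together with the almost-sure bound $H\le H_{max}$ --- which guarantees all sums and expectations are finite and interchangeable --- are the two facts doing the real work.
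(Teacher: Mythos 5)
Your argument follows the same route as the paper's proof: cancel the initial-state and transition factors in the log-likelihood ratio, reduce the trajectory-wise KL to an expected sum of per-state divergences via the tower property, and regroup by state occupancy. The one substantive difference is where the two arguments land, and it traces to the normalization of $d_{\pi_T}$. You take $d_{\pi_T}(x)=\rho(x)/\mathbb{E}[H]$ with $\rho(x)$ the expected visit count, under which your chain of equalities is exact and the proposition holds with equality (hence a fortiori as an inequality). The paper instead defines $d_{\pi_T}(x)=\sum_{t}d_{t,\pi_T}(x)/H_{max}$, i.e.\ $\rho(x)/H_{max}$; under that definition the same regrouping gives $D_{KL}(\phi\;\|\;\theta)=H_{max}\,D^{step}_{KL}(\phi\;\|\;\theta)$, and since $\mathbb{E}[H]\le H_{max}$ and $D^{step}_{KL}\ge 0$, the displayed bound with $\mathbb{E}[H]$ would then point the \emph{wrong} way unless $H=H_{max}$ almost surely. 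So the closing caveat in your write-up is exactly the right one: with the occupancy measure normalized by $\mathbb{E}[H]$ the statement is an identity and your proof is complete; to be consistent with the paper's own definition of $d_{\pi_T}$ one would have to either renormalize that measure or state the bound with $H_{max}$ in place of $\mathbb{E}[H]$. The only hypothesis you use implicitly and should record is that the log-ratios are finite on reachable state--action pairs, which is Assumption~\ref{assumption:feas-optf}.
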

	\begin{proof}
		According to the definition of trajectory-wise KL-divergence, we have:
		\begin{align}
		D_{KL}(\mathbb{P}_\phi (\tau) || \mathbb{P}_\theta (\tau) ) & =
		\sum_{\tau} \mathbb{P}_\phi(\tau) \log\frac{\mathbb{P}_\phi(\tau)}{\mathbb{P}_\theta(\tau)}\nonumber\\
		&=\sum_{\tau} \mathbb{P}_\phi(\tau) \log\frac{\mu(x_0)\prod_{t=0}^{H-1}\pi_T(a_t|x_t;\phi)P(x_{t+1}|x_t,a_t)}{\mu(x_0)\prod_{t=0}^{H-1}\pi_S(a_t|x_t;\theta)P(x_{t+1}|x_t,a_t)}\nonumber\\
		&=\sum_{\tau} \mathbb{P}_\phi(\tau) \sum_{t=0}^{H-1}\log\frac{\pi_T(a_t|x_t;\phi)}{\pi_S(a_t|x_t;\theta)}\nonumber\\
		&=\sum_{x\in\mathcal{X},a\in\mathcal{A}}\sum_{\tau} \mathbb{P}_\phi(\tau) \sum_{t=0}^{H-1} \mathbbm{I}_t (\tau;x,a) \log\frac{\pi_T(a|x;\phi)}{\pi_S(a|x;\theta)}\nonumber\\
		&\leq\sum_{x\in\mathcal{X},a\in\mathcal{A}}\mathbb{E}[H] d_{\pi_T}(x) \pi_T(a|x;\phi) \log\frac{\pi_T(a|x;\phi)}{\pi_S(a|x;\theta)}\nonumber\\
		&= \mathbb{E}[H] \sum_{x\in\mathcal{X}} d_{\pi_T}(x) \sum_{a\in\mathcal{A}}\pi_T(a|x;\phi) \log\frac{\pi_T(a|x;\phi)}{\pi_S(a|x;\theta)}\nonumber\\
		&=\mathbb{E}[H]\; \mathbb{E}_{x\sim d_{\pi_T}}\big[D_{KL}(\pi_T(\cdot|x;\phi) || \pi_S(\cdot|x;\theta) )\big]\nonumber
		\end{align}
		Here, $\mathbbm{I}_t (\tau;x,a)$ is the indicator of whether $(x_t=x,a_t=a)$ occurs along trajectory $\tau$. Also, $d_{\pi_T}(x)$ is the distribution of being in state $x$ under policy $\pi_T$, defined as
		\begin{align}
		d_{\pi_T}(x) = {\sum_{t=0}^{H_{max}} d_{t,\pi_T(x)}}/{H_{max}},\nonumber
		\end{align}
		and $d_{t,\pi_T(x)}$ is the probability of being in $x$ at time $t$ under policy $\pi_T$.
	\end{proof}
	According to this proposition, the step-wise KL distances can be used to provide an upper bound on the trajectory-wise one. In other words, if the step-wise KL multiplied by the expected horizon length are less than $\delta$, then it is also correct for the trajectory-wise one. 

	The only remaining issue is that computing the expectation in \eqref{eq:kl-def} is not straightforward, since we only have access to the sample trajectories of the student during training. Using student samples to approximate the KL-divergence introduces some bias. One can alleviate this bias by incorporating importance sampling (IS) weights as
	\begin{align}
	{D}^{step}_{KL}(\phi\;\|\;\theta)= \mathbb{E}_{x\sim d_{\pi_S}}\left[\frac{d_{\pi_T}(x)}{d_{\pi_S}(x)}{D}_{KL}\bigl(\pi_T(\cdot|x;\phi)\;\|\;\pi_S(\cdot|x;\theta)\bigr)\right]; 
	\end{align}
	however, computing the stationary distributions is still a challenging task, even in simple MDPs with finite state space. One can follow the instructions of \cite{liu2018breaking} for computing the correction values, but they add extra complications and are not the focus of this work. Even though we can more easily compute an (unbiased) estimate of reverse KL-divergence, we will utilize a biased estimation of the forward KL-divergence in most of our numerical analysis because of its ``mean-seeking'' property. Defining this biased forward KL-divergence is common in the literature, e.g., in \cite{schmitt2018kickstarting}.

    Next, we illustrate with an example the low variance of the step-wise approximators compared to the trajectory-wise one.
	
	\textbf{Example: KL Approximation Accuracy using Full Information }
	 We design a simple $2\times 2$ GridWorld example, as illustrated in Figure \ref{fig:2x2}, to visualize the effect of approximating \text{KL}-divergence using Monte Carlo sampling. There is one agent in the top-left corner of the grid and it should reach the goal state located in the bottom-left one. We kept the problem as simple as possible since we wanted to generate all possible trajectories for computing the exact KL-divergence. The length of the horizon for this game is 4, so the total number of possible trajectories is $4^4 = 256$. One may notice that some of these trajectories might fully overlap, but that is fine for the purpose of this experiment. In this experiment, we have used a linear function approximator (i.e., a neural network with no hidden layer) and a medium-sized neural network.
	\begin{figure}[htbp]
		\centering
		\includegraphics[width=0.15\linewidth]{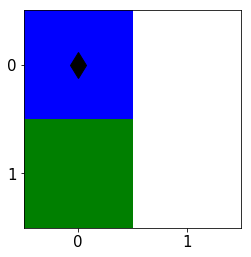}
		\caption{Illustration of the $2\times 2$ GridWorld used for evaluating the effectiveness of KL approximations}
		\label{fig:2x2}
	\end{figure}

 	We train a teacher that produces the actions right, left, up, and down with probabilities 0.7, 0.0, 0.1, and 0.2, respectively. Once the trained network is available, we initialize the student's policy variables with those of the teacher plus a random number. Figure \ref{fig:kl_exact} shows the convergence behavior of the KL approximations to the exact value as we increase the Monte Carlo samples. The horizontal axis shows the number of  sampled trajectories. As we observe,  step-wise KL can provide a very good approximation of KL, even with a single trajectory sample, but the trajectory-wise approximation exhibits unstable behavior which is due to the intrinsic high variance of the estimator.  

	\begin{figure*}[htbp]
	\centering
	\begin{subfigure}[t]{0.45\textwidth}
		\centering
		\includegraphics[width=\textwidth]{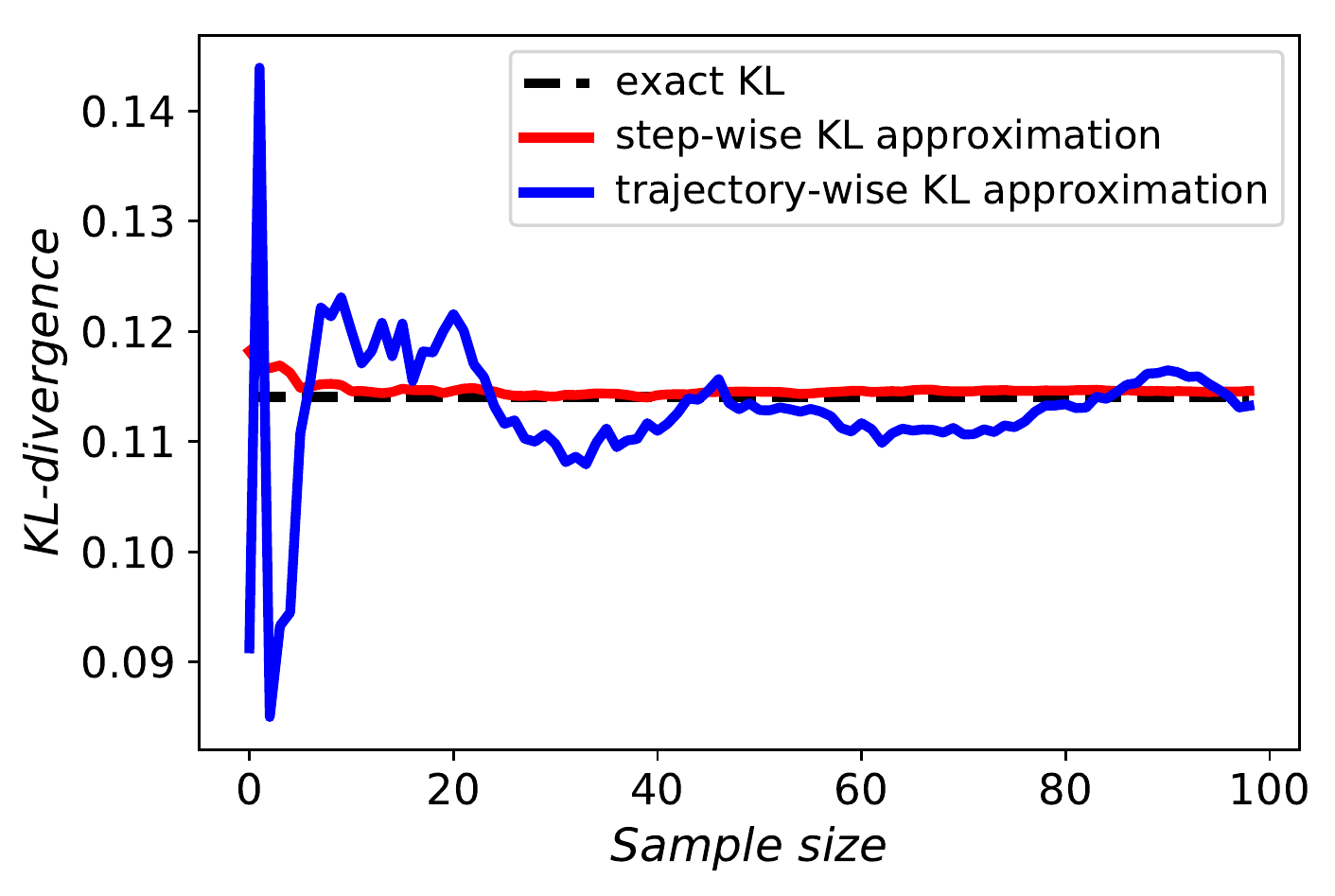}
		\caption{Linear policy approximation}
	\end{subfigure}%
	\begin{subfigure}[t]{0.45\textwidth}
		\centering
		\includegraphics[width=\textwidth]{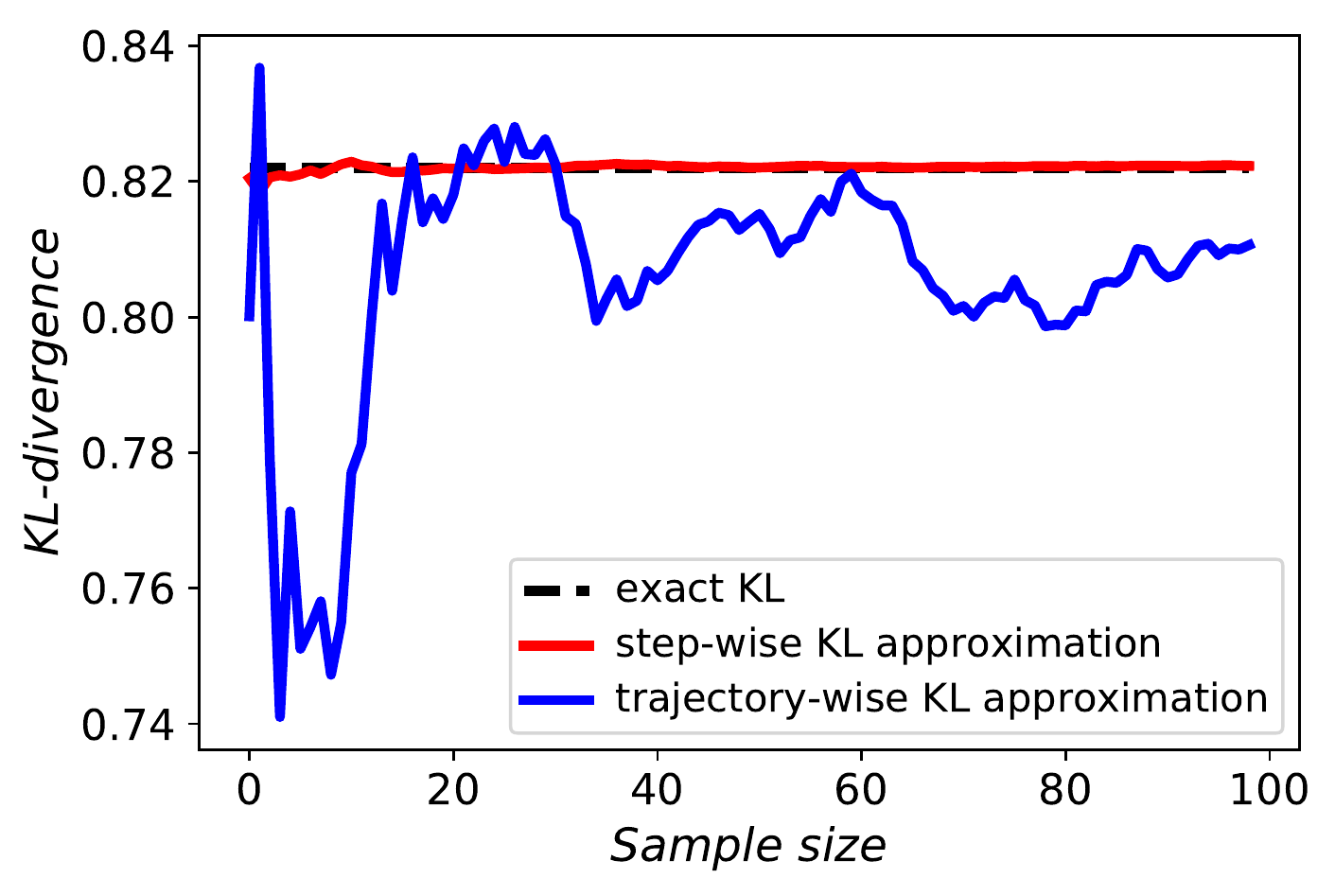}
		\caption{Neural network function approximator with variables}
	\end{subfigure}

	\caption{Comparison of step-wise and trajectory-wise KL approximations, and their convergence to the exact KLs for two different policy approximators}
	\label{fig:kl_exact}
\end{figure*}

    \subsection{Practical PDPG Algorithm}\label{app:ppg}
    According to the discussion of Section \ref{sec:prac-main}, we present the details of the practical PDPG algorithm in this section. We consider a setting in which the pre-trained teacher is readily available. The teacher articulates the status quo of solving the task. It can be a pre-trained RL agent itself, manually designed procedures, or a model of the teacher that has been trained using supervised learning from historical experiences. For example,  the square-wave experiment uses handcrafted tabular policies, while in the wall leaping, experiment the teacher's policy---modeled with a neural network---is the outcome of an actor--critic algorithm. As long as we have cheap access to the teacher throughout the algorithm for numerous queries and get the corresponding probabilities for any given state and action pair, it is sufficient for our purposes. 
    
   Our approach is described in Algorithm~\ref{alg:ppg}. 
    In every training iteration, we sample multiple trajectories under the student's policy, denoted by  $\mathcal{T}^k$, which will be further utilized in approximating the policy gradient, KL approximations, and entropy. For more sample efficiency of the algorithm, we extract multiple sub-trajectories from each $\tau^k_j$, and consider each sub-trajectory as an independent Monte Carlo sample. This is a common modification in policy gradient algorithms and  can provide a satisfactory approximation from a single trajectory experience. Then the teacher provides an approximate probability for all actions at all visited states $x^k_{j,t}$. Once we know the probability of both student and teacher, we can compute the approximate step-wise KL-divergence from steps \ref{algp:kl-approxi1} and \ref{algp:kl-approxi2}. Step \ref{algp:ent-approx} computes the entropy of student's current policy at each iteration. 
    	\begin{algorithm}[htbp]
    	\algsetup{linenosize=\tiny}
    	\footnotesize
		\caption{Practical Primal-Dual Policy Gradient (PDPG) Algorithm for \eqref{opt-f}}
		\label{alg:ppg}
		\begin{algorithmic}[1]
			\STATE \textbf{input:} teacher's policy with weights $\phi$
			\STATE \textbf{initialize:} student's policy with $\theta^0$, possibly equal $\phi$; initialize step size schedules $\alpha_1(\cdot)$, $\alpha_2(\cdot)$ and $\alpha_3(\cdot)$
			\WHILE {TRUE}
			\FOR {$k = 0,1,\cdots$}
			\STATE following policy $\theta^k$, generate a set of $N$ trajectories $\mathcal{T}^k=\{\tau_j^k,\,j =1,2,\cdots,N \}$, each starting from an initial state $x_0 \sim P_0(\cdot)$
			\STATE extract all trajectories $\tau_{j,t}^k$, which is a sub-trajectory of $\tau_{j}^k$ from $x_{j,t}^k$ onwards; also compute their corresponding accumulated reward $J(\tau^k_{j,t})$ and log-probability $\log{\tilde{\mathbb{P}}_\theta(\tau^{k}_{j,t})}\coloneqq \sum_{t=0}^{H_j^k-1}\log \pi_S(a_{j,t}^k|x_{j,t}^k,\theta)$. Let $\bar{\mathcal{T}}^k$ be the set of all sub-trajectories for all visited states $x^k_{j,t}$

			\STATE query the teacher and compute $\pi_T(\cdot|x_{j,t}^k,\theta^k)$
			\STATE \label{algp:kl-approxi1} compute KL-divergence for all visited states $x^k_{j,t}$, i.e.,
			 \begin{align}
            {D}^{step}_{KL}\bigl(\pi_T(\cdot|x^k_{j,t};\phi)\;\|\;\pi_S(\cdot|x^k_{j,t};\theta^k)\bigr) = \sum_{a\in\mathcal{A}}  \pi_T(a|x^k_{j,t};\phi) \log\frac{\pi_T(a|x^k_{j,t};\phi)}{\pi_S(a|x^k_{j,t};\theta^k)}, \quad\forall j,t
            \end{align}
	    
			\STATE \label{algp:kl-approxi2} \textbf{(\textit{KL approximation})} compute the approximate KL-divergence as
			\begin{align}
			    \hat{D}^{step}_{KL} (\phi\;\|\;\theta^k) = \frac{1}{N}\sum_{j=1}^{N} \frac{1}{H_j^k-1} \sum_{t=0}^{H_j^k-1} \textsc{clip}_{\rho}\left({D}^{step}_{KL}\bigl(\pi_T(\cdot|x^k_{j,t};\phi)\;\|\;\pi_S(\cdot|x^k_{j,t};\theta^k)\bigr)\right)
			\end{align}
			\STATE \label{algp:ent-approx}\textbf{(\textit{entropy approximation})} compute the approximate entropy 
		    \begin{align}
                e\hat{n}t(\theta^k) = - \frac{1}{N} \sum_{j=1}^N \frac{1}{H_j^k-1}\sum_{t=0}^{H_j^k-1} \sum_{a\in\mathcal{A}}  \pi_S(a|x^k_{j,t};\theta^k) \log{\pi_S(a|x^k_{j,t};\theta^k)}\label{algp:eq:ent-constraint}
            \end{align}
            \STATE \label{algp:loss}\textbf{(\textit{compute loss})} compute the loss according to
            \begin{align}
                Loss(\theta^k,\lambda^k,\zeta^k) = \frac{1}{|\bar{\mathcal{T}}^k|}\sum_{\tau_{j,t}^k \in \bar{\mathcal{T}}^k}&\log \tilde{\mathbb{P}}(\tau_{j,t}^k)(J(\tau_{j,t}^k)-V(x_{j,t}^k)) +\nonumber\\
                \lambda^k &(\hat{D}^{step}_{KL} (\phi\;\|\;\theta^k) - \delta)+ \zeta^k (e\hat{n}t(\theta^k) - \delta^{ent})
            \end{align}

			\STATE \label{algp:pg:theta-step} \textbf{($\theta$\textit{-update})} update $\theta^k$ according to  
			\begin{align}
			\theta^{k+1} = \Gamma_{\Theta}\Bigl[\theta^{k} - \alpha_1(k)\Bigl( \frac{1}{N}\sum_{j=1}^{N} \nabla_{\theta}Loss(\theta,\lambda^k,\zeta^k)\bigr|_{\theta=\theta^k}\Bigr)\Bigr]\label{algp:eq:update-theta}
			\end{align}
			\STATE \label{algp:eq:update-lambda} \textbf{($\lambda$\textit{-update})} update $\lambda^k$ according to
			\begin{align}
			\lambda^{k+1} = \Gamma_{\Lambda}\Bigl[\lambda^{k} + \alpha_2(k) \Bigl(\hat{D}^{step}_{KL} (\phi\;\|\;\theta^k) -\delta \Bigr)\Bigr]
			\end{align}
			\STATE \label{algp:eq:update-zeta} \textbf{($\zeta$\textit{-update})} update $\zeta^k$ with rule
			\begin{align}
			\lambda^{k+1} = \Gamma_{Z}\Bigl[\lambda^{k} + \alpha_3(k) \Bigl(e\hat{n}t(\theta^k) -\delta^{ent} \Bigr)\Bigr]
			\end{align}
			\ENDFOR
			\STATE update $\lambda_{max}$ similar to Algorithm \ref{alg:PG}
			\IF {$\zeta^k$ converges to $\zeta_{max}$} \label{algp:pg:lambdamaxbeg}
			\STATE $\zeta_{max} \leftarrow  \zeta_{max} + constant$
			\ELSIF  {$\zeta^k$ converges to $\zeta_{min}$}
			\STATE $\zeta_{min} \leftarrow  \zeta_{min} - constant$
			\ELSE {}
			\STATE return $\theta$, $\lambda$ and $\zeta$; break
			
			\ENDIF\label{algp:pg:lambdamaxend}
			
			\ENDWHILE
		\end{algorithmic}
	\end{algorithm}

	
	Now, we have all approximations for computing update directions. In step \ref{algp:loss}, we use all previously computed sub-trajectory log-probabilities and their cumulative sampled reward along with the KL and entropy approximation to compute the loss. In this step, we also use a critic to provide a value of being at the initial point of each sub-trajectory $V(x_{j,t}^k)$, which will provide a baseline for variance reduction. Note that we didn't include the critic steps in our main algorithm since it follows a standard actor--critic design. Step \ref{algp:pg:theta-step} updates the policy parameters using the approximate gradient of loss with respect to $\theta$ at point $\theta^k$. To be precise, the approximate gradient is employed in a first-order optimizer, e.g., ADAM \cite{kingma2014adam}, to update the $\theta$ values in the descent direction of the loss. Finally, the Lagrange multipliers $\lambda$ and $\zeta$ are updated based on the amount of constraint violation at steps \ref{algp:eq:update-lambda} and \ref{algp:eq:update-zeta}. We also periodically check to see whether $\lambda^k$ has converged to $\lambda_{max}$, in which case we increase its quantity similar to Algorithm \ref{alg:PG}. Notice that since we have considered an equality constraint for entropy, its Lagrange multipliers can be positive or negative. To this end, we consider $\zeta\in[\zeta_{min},\zeta_{max}]$ and if it converges to the boundary, we will increase the interval length.


	\section{Extended Details on Experiments}
	In this section, we describe our environmental setup and hyper-parameters used. We also discuss more extensive results from the square-wave experiment.
	\subsection{Problem Setup}
	In all of our experiments, the first step was to identify the teacher. In the square-wave experiment, we manually designed all teacher probabilities at every state. We also modeled situations  in which the teacher is less ``determined'' and follows a more complicated decision-making scheme, such as in our wall leaping experiment, in which the teacher is the policy of an agent trained using the actor--critic algorithm.

 Although we could have initialized the student's policy randomly, we chose to initialize it with a pre-trained neural network. In all experiments, we train the neural network for the unconstrained problem and using the actor--critic algorithm.  Similarly, the student's critic is initialized from the previously trained critic. Notice that the student's initial policy does not need to be the same as the teacher's policy. For example, the initial student policy in the square-wave experiment always takes the horizontal path, which is totally different from that of the teacher. Nevertheless, starting from a policy close to the teacher would expedite the learning process since there is a high probability of finding an improved policy in the proximity of the teacher.
    
    However, starting from a previously trained network can bring some difficulties. For example, having a deterministic initial policy would lead to a limited amount of exploration. To mitigate this issue, we use a temperature hyper-parameter when sampling from our $\softmax$, similar to \cite{bello2016neural}. In this method, we normalize the output of the neural network---called \textit{logits}---with temperature and then compute the sampling probabilities as $\pi(\cdot| s;\theta) = \softmax(\;logits\;/\;temperature\;)$. Using a temperature greater than 1 smoothes out the sampling probability distribution, so there will be a higher chance of visiting less-explored states.
    
    In all of our experiments, we used a neural network with two hidden layers, each with 64 neurons. We used the ADAM optimizer \cite{kingma2014adam} with step size $1\mathrm{e}{-}3$ to update the student's policy and critic. The temperature is 5; $\lambda$ and $\zeta$ start from 1. The learning rate for $\lambda$ and $\zeta$ starts from $1\mathrm{e}{-}3$ and decays to $1\mathrm{e}{-}3$ during training. The right hand sides of all entropy constraints are set to $0.02$. We also have a plan to open-source our PyTorch code upon publication.

    \subsection{Extended Results from Square Wave Experiment}
    This section includes extended results for the square-wave experiment. 
    
    \subsubsection{Effect of $\delta$ on Lagrange Multipliers }
    
    In this part, we illustrate the convergence of both Lagrange multipliers $\lambda$ and $\zeta$ to some steady values. From duality theory, we know that the converged values are a function of $\delta$, and in Figure \ref{fig:rel_lag}, we delineate these quantities for four different $\delta$. In Figure \ref{fig:rel_lambda}, we observe that as we increase $\lambda$ (i.e. relax the constraint), we will converge to a smaller $\lambda^*$. A similar monotonic relation is observed in Figure \ref{fig:rel_zeta}. The latter observation hypothesizes that in the square-wave experiment, larger $\delta$ values would bring more stochasticity to the optimal policy.
    
	  \begin{figure*}[htbp]
		\centering
		\begin{subfigure}[t]{0.48\columnwidth}
			\centering
			\includegraphics[width=\columnwidth,trim=.1cm .1cm .1cm .1cm,clip]{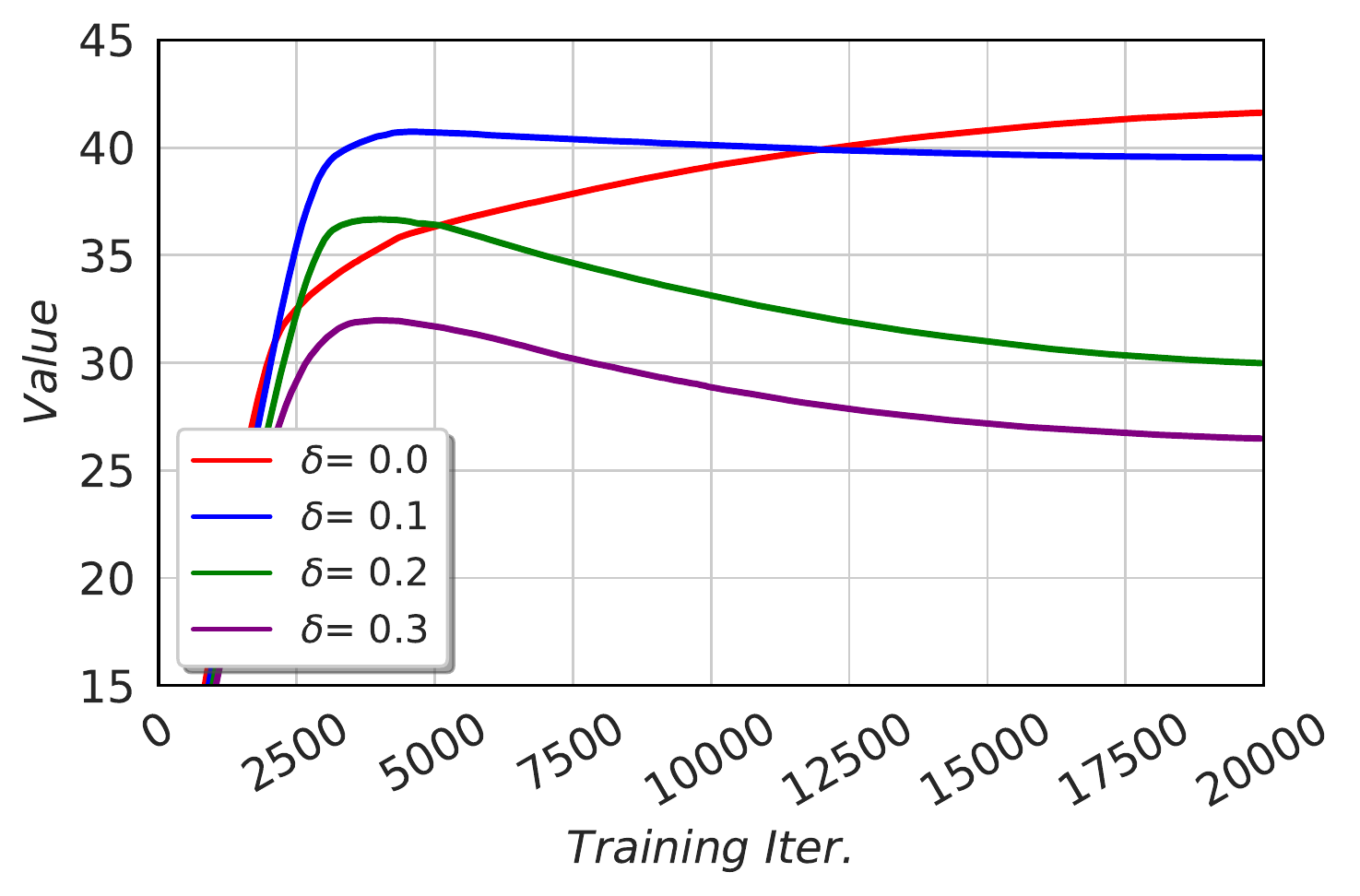}
			\caption{$\lambda$ convergence}
			\label{fig:rel_lambda}
		\end{subfigure}
		\hspace{.1cm}
		\begin{subfigure}[t]{0.48\columnwidth}
			\centering
			\includegraphics[width=\columnwidth,trim=.1cm .1cm .1cm .1cm,clip]{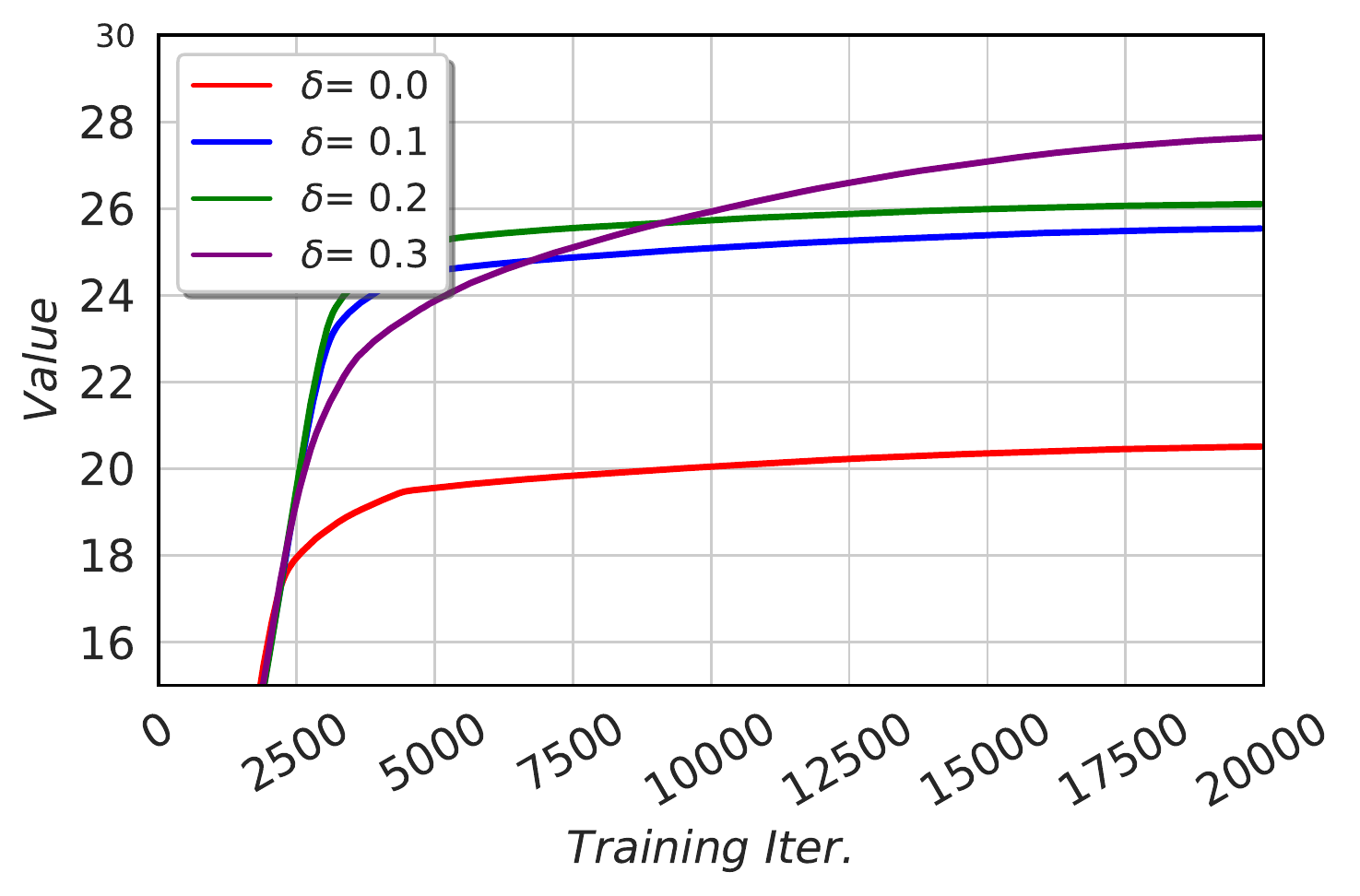}
			\caption{$\zeta$ convergence}
			\label{fig:rel_zeta}
		\end{subfigure}
		\caption{The effect of $\delta$ on Lagrange multipliers}
		\label{fig:rel_lag}
	\end{figure*}
	
    \subsubsection{Effect of Using Reverse KL-divergence}
    Note that throughout the experiments up to this point, we have used the forward KL-divergence. In this experiment, we intend to use the reverse KL constraint instead of the forward one to see how it affects learning. As we observe in Figure \ref{fig:full_rew-delta-rev}, the student  always converges to the teacher, no matter what the value of $\delta$ is. This is consistent with our theory that the student will converge to a sub-policy of the teacher. In fact, since the square-wave path is the only way that the teacher can reach the target, it will also be the optimal path for the student as well. Figure \ref{fig:full_rew-rho-rev} shows that adding  KL-clipping leads to different performance levels. 
    
    In Figure \ref{fig:part_env_zig_res_rev}, we study the effect of using the reverse KL-constraint in learning from the less confident teacher. As we observe from Figure \ref{fig:part_rew-delta-rev}, the student has converged to a policy similar to the square-wave policy, but with a few random actions. As we increase $\delta$, we allow the student to disagree with the teacher more in the less confident states, so her policy becomes closer to the square-wave path (with reward 60). Also, Figure \ref{fig:part_rew-rho-rev} shows a similar result as before on how KL-clipping may increase the reward.
    
	  \begin{figure*}[htbp]
		\centering
		\begin{subfigure}[t]{0.48\columnwidth}
			\centering
			\includegraphics[width=\columnwidth,trim=.1cm .1cm .1cm .1cm,clip]{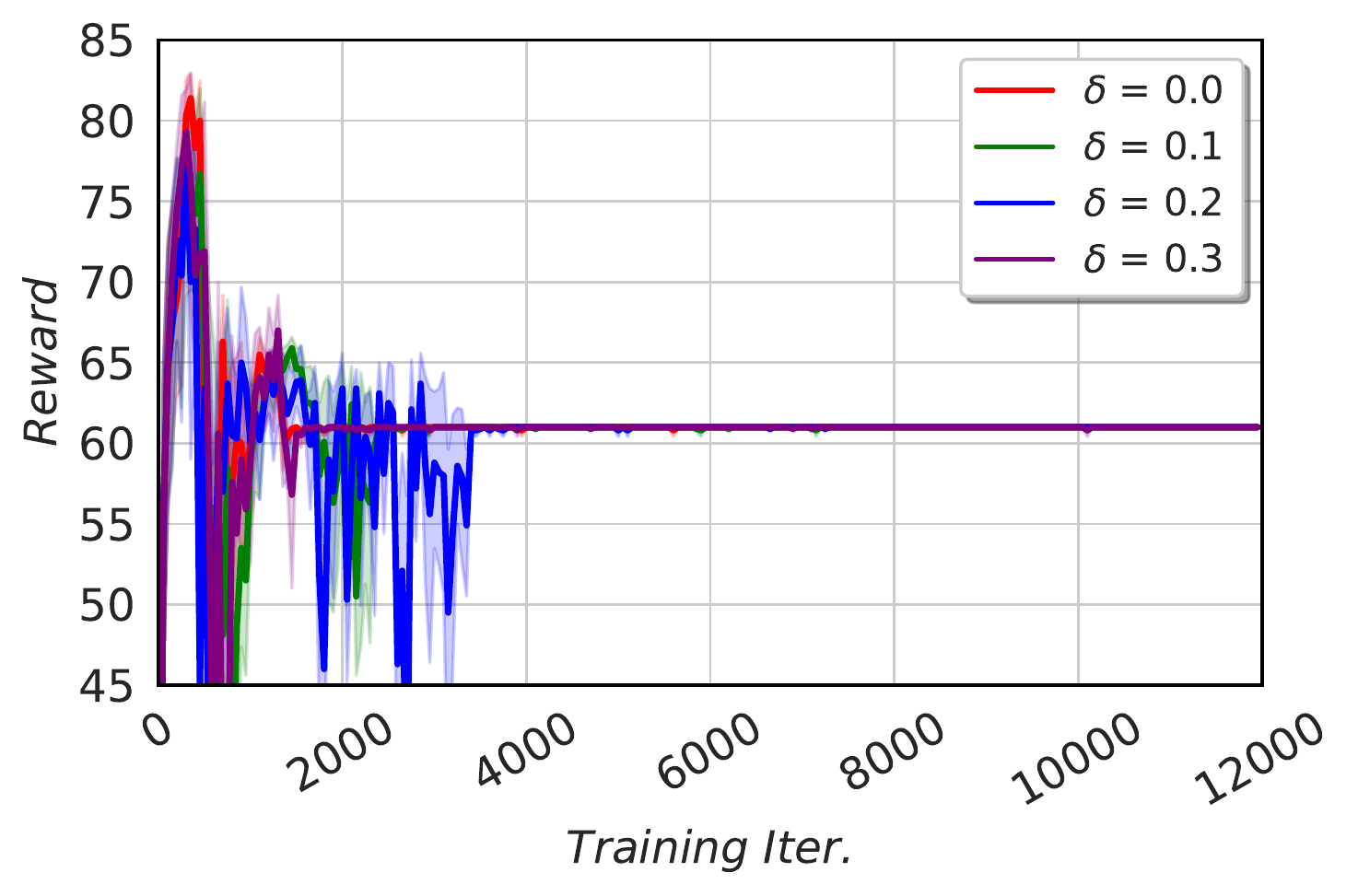}
			\caption{The effect of $\delta$ on reward; no KL-clipping}
			\label{fig:full_rew-delta-rev}
		\end{subfigure}
		\hspace{.1cm}
		\begin{subfigure}[t]{0.48\columnwidth}
			\centering
			\includegraphics[width=\columnwidth,trim=.1cm .1cm .1cm .1cm,clip]{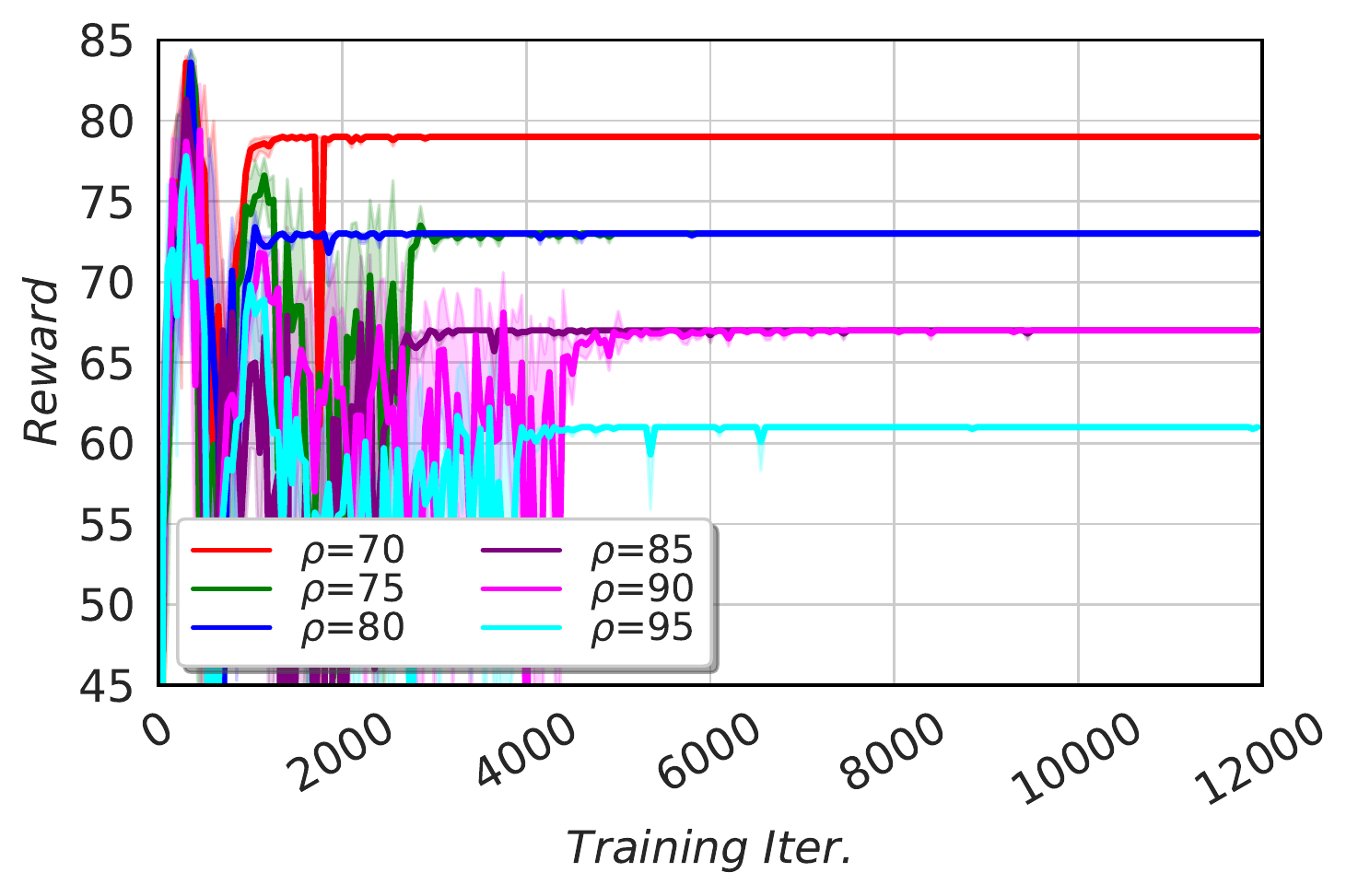}
			\caption{Total reward for different $\rho$ and $\delta=0.2$}
			\label{fig:full_rew-rho-rev}
		\end{subfigure}
		\caption{Performance of a student learning from the determined teacher using the reverse KL constraint}
		\label{fig:full_env_zig_res_rev}
	\end{figure*}
	
		  \begin{figure*}[htbp]
		\centering
		\begin{subfigure}[t]{0.48\columnwidth}
			\centering
			\includegraphics[width=\columnwidth,trim=.1cm .1cm .1cm .1cm,clip]{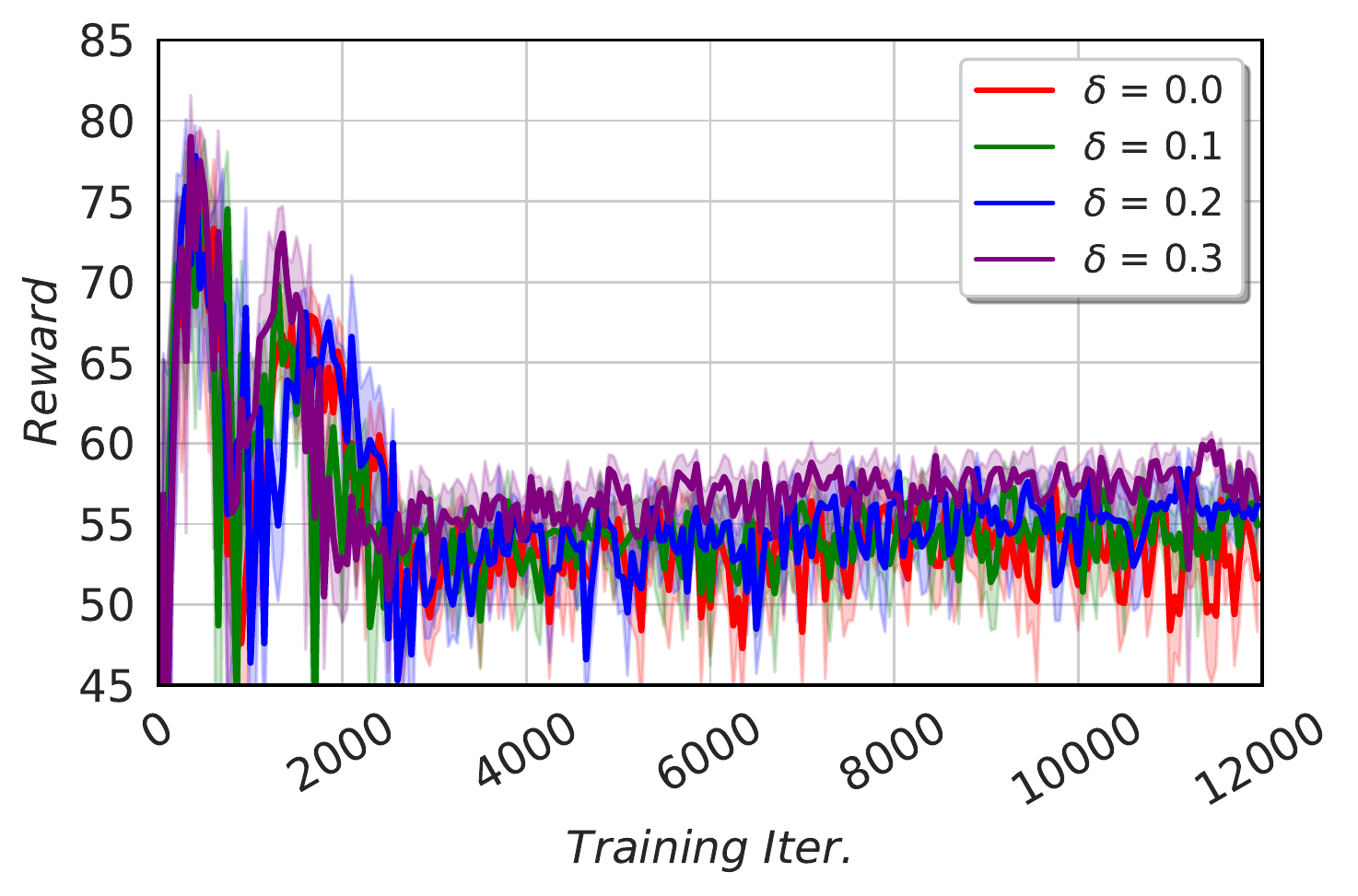}
			\caption{The effect of $\delta$ on reward; no KL-clipping}
			\label{fig:part_rew-delta-rev}
		\end{subfigure}
		\hspace{.1cm}
		\begin{subfigure}[t]{0.48\columnwidth}
			\centering
			\includegraphics[width=\columnwidth,trim=.1cm .1cm .1cm .1cm,clip]{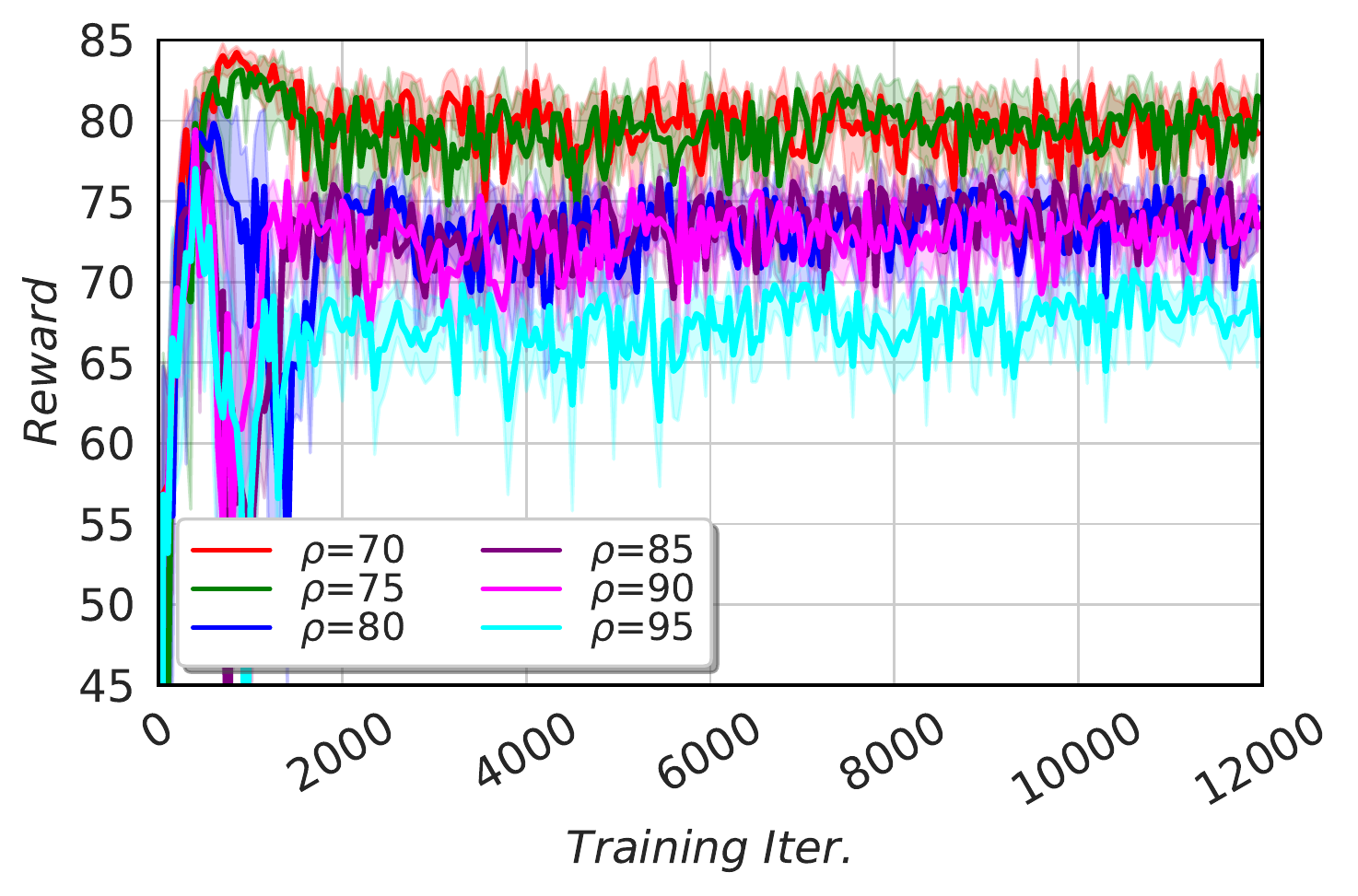}
			\caption{Total reward for different $\rho$ and $\delta=0.2$}
			\label{fig:part_rew-rho-rev}
		\end{subfigure}
		\caption{Performance of a student learning from the less confident teacher using the reverse KL constraint}
		\label{fig:part_env_zig_res_rev}
	\end{figure*}
	
	\subsubsection{Hellinger Constraint}
	
	As we discussed in Section \ref{sec:prac-main}, one way to reduce the stochasticity of the converged policy might be using a finite distance measure such as the Hellinger metric in our constraints. Figure \ref{fig:full_rew-H} shows the reward attained for the cases with and without the entropy constraint. As we see, the student has converged to a policy with reward 85, which corresponds to the horizontal path.  Hence, she was not successful in learning from the teacher. We tried different configurations as well, but all exhibit similar behavior. Figure \ref{fig:full_ent-H} shows that without using the entropy constraint, the student has relatively high entropy, but she is still unable to follow the teacher.

	\begin{figure*}[htbp]
		\centering
		\begin{subfigure}[t]{0.48\columnwidth}
			\centering
			\includegraphics[width=\columnwidth,trim=.1cm .1cm .1cm .1cm,clip]{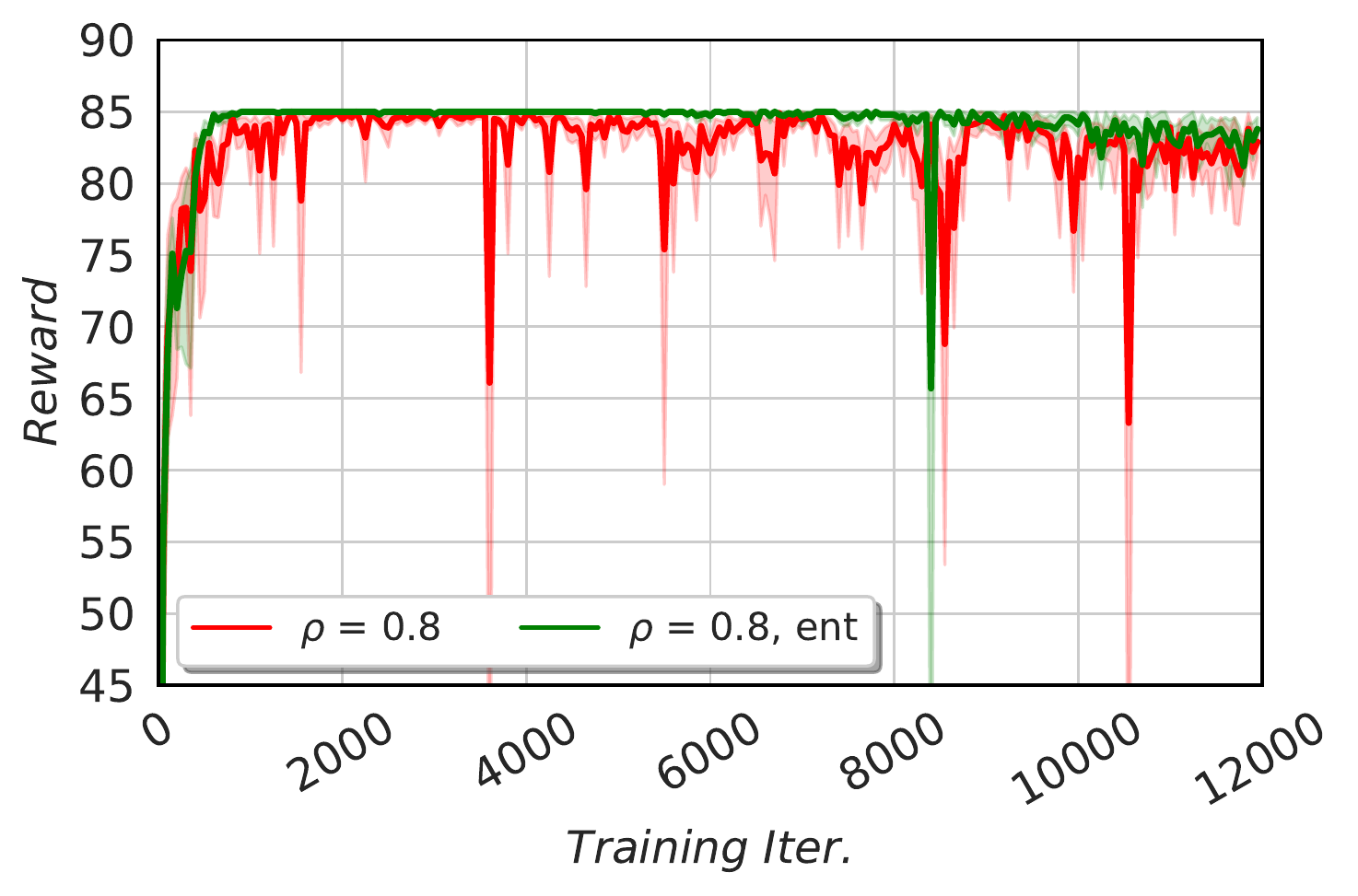}
			\caption{Reward gained with Hellinger constraint for $\delta=0.2$}
			\label{fig:full_rew-H}
		\end{subfigure}
		\hspace{.1cm}
		\begin{subfigure}[t]{0.48\columnwidth}
			\centering
			\includegraphics[width=\columnwidth,trim=.1cm .1cm .1cm .1cm,clip]{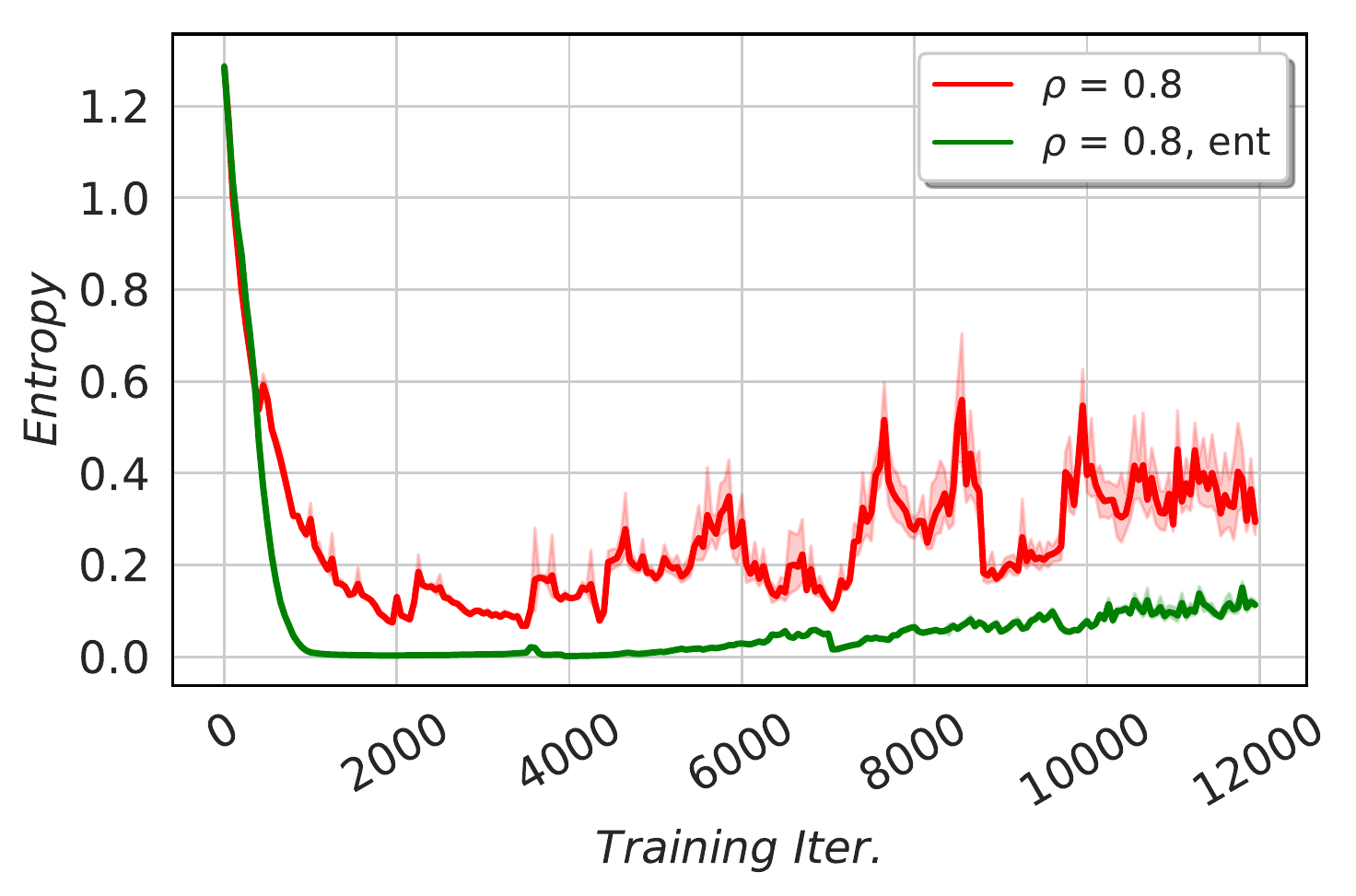}
			\caption{Total reward for different $\rho$ and $\delta=0.2$}
			\label{fig:full_ent-H}
		\end{subfigure}
		\caption{Performance of a student learning from determined teacher using the Hellinger constraint}
		\label{fig:full_env_zig_H}
	\end{figure*}

\end{document}